\def\ps@pprintTitle{%
 \let\@oddhead\@empty
 \let\@evenhead\@empty
 \def\@oddfoot{\centerline{\thepage}}%
 \let\@evenfoot\@oddfoot}
\newtheorem{mprop}{Proposition}
\newtheorem{mtheor}{Theorem}
\journal{Neurocomputing}
\begin{document}

\renewcommand{\theaffn}{\arabic{affn}}

\begin{frontmatter}

\title{Modified Frank--Wolfe Algorithm for Enhanced Sparsity in Support Vector Machine Classifiers}

\author[AdUAM]{Carlos M. Ala\'{i}z}
\ead{carlos.alaiz@inv.uam.es}

\author[AdKUL]{Johan~A.K.~Suykens}
\ead{johan.suykens@esat.kuleuven.be}

\address[AdUAM]{Dpto. Ing. Inform\'atica, Universidad Aut\'onoma de Madrid, 28049 Madrid, Spain}
\address[AdKUL]{Dept. Electrical Engineering (ESAT--STADIUS), KU Leuven, B-3001 Leuven, Belgium}

\begin{abstract}
 This work proposes a new algorithm for training a re-weighted $\lt$ Support Vector Machine (SVM), inspired on the re-weighted Lasso algorithm of Cand\`es \emph{et al.} and on the equivalence between Lasso and SVM shown recently by Jaggi. In particular, the margin required for each training vector is set independently, defining a new weighted SVM model. These weights are selected to be binary, and they are automatically adapted during the training of the model, resulting in a variation of the Frank--Wolfe optimization algorithm with essentially the same computational complexity as the original algorithm.

 As shown experimentally, this algorithm is computationally cheaper to apply since it requires less iterations to converge, and it produces models with a sparser representation in terms of support vectors and which are more stable with respect to the selection of the regularization hyper-parameter.
\end{abstract}

\begin{keyword}
Support Vector Machines \sep Sparsity \sep Frank--Wolfe \sep Lasso
\end{keyword}

\end{frontmatter}

\section{Introduction}
\label{SecIntro}

Regularization is an essential mechanism in Machine Learning that usually refers to the set of techniques that attempt to improve the estimates by biasing them away from their sample-based values towards values that are deemed to be more ``physically plausible''~\cite{Friedman1989}.
In practice, it is often used to avoid over-fitting, apply some prior knowledge about the problem at hand or induce some desirable properties over the resulting learning machine.
One of these properties is the so called sparsity, which can be roughly defined as expressing the learning machines using only a part of the training information. This has advantages in terms of the interpretability of the model and its manageability, and also preventing the over-fitting.
Two representatives of this type of models are the Support Vector Machines (SVM~\cite{Cortes1995}) and the Lasso model~\cite{Tibshirani1996}, based on inducing sparsity at two different levels.
On the one hand, the SVMs are sparse in their representation in terms of the training patterns, which means that the model is characterized only by a subsample of the original training dataset.
On the other hand, the Lasso models induce sparsity at the level of the features, in the sense that the model is defined only as a function of a subset of the inputs, hence performing an implicit feature selection.

Recently, Jaggi~\cite{Jaggi2014} showed an equivalence between the optimization problems corresponding to a classification $\lt$-SVM and a constrained regression Lasso.
As explored in this work, this connection can be useful to transfer ideas from one field to the other. In particular, and looking for sparser SVMs, in this paper the reweighted Lasso approach of Cand\`es \emph{et al.}~\cite{Candes2008} is taken as the basis to define first a weighted $\lt$-SVM, and then to propose a simple way of adjusting iteratively the weights that leads to a Modified Frank--Wolfe algorithm.
This adaptation of the weights does not add an additional cost to the algorithm. Moreover, as shown experimentally the proposed approach needs less iterations to converge than the standard Frank--Wolfe, and the resulting SVMs are sparser and much more robust with respect to changes in the regularization hyper-parameter, while retaining a comparable accuracy.

In summary, the contributions of this paper can be stated as follows:
\begin{enumerate}[(i)]
 \item The definition of a new weighted SVM model, inspired by the weighted Lasso and the connection between Lasso and SVM. This definition can be further extended to a re-weighted SVM, based on an iterative scheme to define the weights.
 \item The proposal of a modification of the Frank--Wolfe algorithm based on the re-weighting scheme to train the SVM. This algorithm results in a sparser SVM model, which coincides with the model obtained using a standard SVM training algorithm over only an automatically-selected subsample of the original training data.
 \item The numerical comparison of the proposed model with the standard SVM over a number of different datasets. These experiments show that the proposed algorithm requires less iterations while providing a sparser model which is also more stable against modifications of the regularization parameter.
\end{enumerate}

The remaining of the paper is organized in the following way. \Ref{SecPre} summarizes some results regarding the connection of SVM with Lasso. The weighted and re-weighted SVM are introduced in \ref{SecWSVM}, whereas the proposed modified Frank--Wolfe algorithm is presented in \ref{SecMFW}. The performance of this algorithm is tested through some numerical experiments in \ref{SecExp}, and \ref{SecConc} ends the paper with some conclusions and pointers to further work.

\subsubsection*{Notation}

$\npat$ denotes the number of training patterns, and $\ndim$ the number of dimensions. The data matrix is denoted by $\mx = \prn{\mxp{1}, \mxp{2}, \dotsc, \mxp{\npat}}^\tr \in \Rpd$, where each row correspond to the transpose of a different pattern $\mxp{i} \in \Rd$. The corresponding vector of targets is $\vy \in \Rp$, where $\vyi{i} \in \set{-1, +1}$ denotes the label of the $i$-th pattern. The identity matrix of dimension $\npat$ is denoted by $\iden{\npat} \in \Rpp$.
\section{Preliminaries}
\label{SecPre}

This section covers some preliminary results concerning the Support Vector Machine (SVM) formulation, its connection with the Lasso model, and the re-weighted Lasso algorithm, which are included since they are the basis of the proposed algorithm.

\subsection{SVM Formulation}

The following $\lt$-SVM classification model (this model is described for example in~\cite{Keerthi2000}), crucial in~\cite{Jaggi2014}, will be used as the starting point of this work:
\begin{equation*}
 \label[problem]{EqProbSVMP}
 \minpc{\vw, \rho, \xi}{\frac{1}{2} \nt{\vw}^2 - \rho + \frac{C}{2} \sum_{i = 1}^\npat {\xi_i^2}}{\vw^\tr \mzi{i} \ge \rho - \xi_i} \eeq{,}
\end{equation*}
where $\mzi{i} = \vyi{i} \mxp{i}$.
Straightforwardly, the corresponding Lagrangian dual problem can be expressed as:
\begin{equation}
 \label[problem]{EqProbSVMD}
 \minpc{\va \in \Rp}{\va^\tr \mkh \va}{0 \le \vai{i} \le 1 \sepcons \sum_{i=1}^{\npat} \vai{i} = 1} \eeq{,}
\end{equation}
where $\mkh = \mz \mz^\tr + \frac{1}{C} \iden{\npat}$.
A non-linear SVM can be considered simply by substituting $\mz \mz^\tr$ by the (labelled) kernel matrix $\mk \hadam \vy \vy^\tr$ (where $\hadam$ denotes the Hadamard or component-wise product).

It should be noticed that the feasible region of \ref{EqProbSVMD} is just the probability simplex, and the objective function is simply a quadratic term.

\subsection{Connection between Lasso and SVM}
\label{SecEquiv}

There exists an equivalence between the SVM dual \ref{EqProbSVMD} and the following problem, which corresponds to a constrained Lasso regression model:
\begin{equation}
 \label[problem]{EqProbLasso}
 \minpc{\vw \in \Rd}{\nt{\mx \vw - \vy}^2}{\no{\vw} \le 1} \eeq{,}
\end{equation}
where in this case the vector $\vy \in \Rp$ does not need to be binary. In particular, a problem of the form of \ref{EqProbSVMD} can be rewritten in the form of \ref{EqProbLasso} and vice-versa~\cite{Jaggi2014}.

This relation is only at the level of the optimization problem, which means that an $\lt$-SVM model can be trained using the same approach as for training the Lasso model and the other way around (as done in~\cite{Alaiz2015}), but it cannot be extended to a prediction phase, since the Lasso model is solving a regression problem, whereas the SVM solves a classification one. Moreover, the number of dimensions and the number of patterns flip when transforming one problem into the other.
Nevertheless, and as illustrated in this paper, this connection can be valuable by itself to inspire new ideas.

\subsection{Re-Weighted Lasso}

The re-weighted Lasso (\rwla{}) was proposed as an approach to approximate the \lz{} norm by using the \lo{} norm and a re-weighting of the coefficients~\cite{Candes2008}.
In particular, this approach was initially designed to approximate the problem
\begin{equation*}
 \label[problem]{EqProbRWA}
 \minpc{\vw \in \Rd}{\nz{\vw}}{\vy = \mx \vw} \eeq{,}
\end{equation*}
by minimizing weighted problems of the form:
\begin{equation}
 \label[problem]{EqProbRWB}
 \minpc{\vw \in \Rd}{\sum_{i = 1}^{\ndim} \vti{i} \abs{\vwi{i}}}{\vy = \mx \vw} \eeq{,}
\end{equation}
for certain weights $\vti{i} > 0$, $i = 1, \dotsc, \ndim$. An iterative approach was proposed, where the previous coefficients are used to define the weights at the current iterate:
\begin{equation}
\label{EqRWLWeights}
 \vtik{i}{k} = \frac{1}{\abs{\vwik{i}{k-1}} + \epsilon} \eeq{,}
\end{equation}
what results in the following problem at iteration $k$:
\begin{equation*}
 \label[problem]{EqProbRWC}
 \minpc{\vwk{k} \in \Rd}{\sum_{i = 1}^{\ndim} \frac{1}{\abs{\vwik{i}{k-1}} + \epsilon} \abs{\vwik{i}{k}}}{\vy = \mx \vwk{k}} \eeq{.}
\end{equation*}
The idea is that if a coefficient is small, then it could correspond to zero in the ground-truth model, and hence it should be pushed to zero. On the other side, if the coefficient is large, it most likely will be different from zero in the ground-truth model, and hence its penalization should be decreased in order not to bias its value.

This approach is based on a constrained formulation that does not allow for training errors, since the resulting model will always satisfy $\vy = \mx \vw$. A possible implementation of the idea of \ref{EqProbRWB} without such a strong assumption is the following:
\begin{equation*}
 \label[problem]{EqProbRWD}
 \minpc{\vw \in \Rd}{\nt{\mx \vw - \vy}^2}{\sum_{i = 1}^{\ndim} \vti{i} \abs{\vwi{i}} \le 1} \eeq{,}
\end{equation*}
where the errors are minimized and the weighted $\lo$ regularizer is included as a constraint (equivalently, the regularizer could be also added to the objective function~\cite{Zou2006}). The iterative procedure to set the weights can still be the one explained above, where the weights at iteration $k$ are defined using \ref{EqRWLWeights}.

\subsection{Towards a Sparser SVM}

One important remark regarding the \rwla{} is that the re-weighting scheme breaks the equivalence with the SVM explained in \ref{SecEquiv}, i.e., one cannot simply apply the \rwla{} approach to solve the SVM problem in order to get more sparsity in the dual representation (i.e.\ fewer support vectors).
Instead, an analogous scheme will be directly included in the SVM formulation in the section below.

More specifically, and as shown in \ref{FigScheme}, the connection between Lasso and SVM suggests to apply a weighting scheme also for SVM. In order to set the weights, an iterative procedure (analogous to the \rwla{}) seems to be the natural step, although this would require to solve a complete SVM problem at each iteration. Finally, an online procedure to determine the weights, that are adapted directly at the optimization algorithm, will lead to a modification of the Frank--Wolfe algorithm.

{
\renewcommand{\plotline}[1]{{\color{box}$\blacksquare$}}
\begin{mfigure}{\label{FigScheme} Scheme of the relation between the proposed methods and the inspiring Lasso variants.}
 \legend{\showlegendcolour{box}~State-of-the-art Methods}{\showlegendcolour{box}~State-of-the-art Methods\seplegend\showlegendcolour{boxo}~Proposed Methods}
 \ifusetikz\scriptsize\tikzsetnextfilename{Scheme}{\input{./Scheme.tikz}}\else\includegraphics{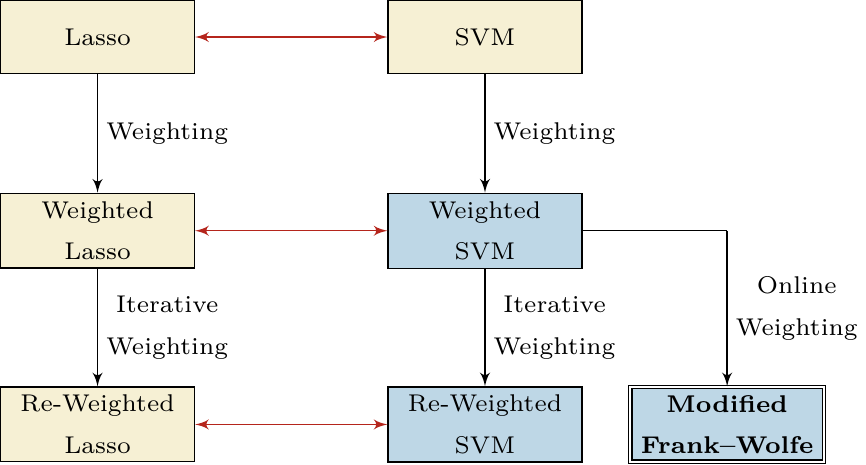}\fi
\end{mfigure}
}

It should be stated that a weighted SVM has been already proposed in~\cite{Lapin2014}, but that model differs from the approach described here. In particular, the weighing of \cite{Lapin2014} refers to the primal problem (through different regularization parameters associated to each pattern) whereas in this work the weighting refers directly to the dual problem. As explained in \ref{SSecWSVM}, both models are not equivalent.

\section{Weighted and Re-Weighted SVM}
\label{SecWSVM}

In this section the weighted SVM model is proposed. Furthermore, a re-weighting scheme to define iteratively the weights is sketched.

\subsection{Weighted SVM}
\label{SSecWSVM}

In order to transfer the weighting scheme of \rwla{} to an SVM framework, the most natural idea is to directly change the constraint of \ref{EqProbSVMD} to introduce the scaling factors $\vti{i}$. This results in the following Weighted-SVM (\wsvm{}) dual optimization problem:
\begin{equation}
 \label[problem]{EqProbWSVMD}
 \minpc{\va \in \Rp}{\va^\tr \mkh \va}{0 \le \vai{i} \sepcons \sum_{i=1}^{\npat} {\vti{i} \vai{i}} = 1} \eeq{,}
\end{equation}
for a fixed vector of weights $\vt$. This modification relates with the primal problem as stated in the proposition below.

\begin{mprop}
The \wsvm{} primal problem corresponding to \ref{EqProbWSVMD} is:
\begin{equation}
\label[problem]{EqProbWSVMP}
 \minpc{\vw, \rho, \xi}{\frac{1}{2} \nt{\vw}^2 - \rho + \frac{C}{2} \sum_{i = 1}^\npat {\xi_i^2}}{\vw^\tr \mzi{i} \ge \vti{i} \rho - \xi_i} \eeq{.}
\end{equation}
\end{mprop}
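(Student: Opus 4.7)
The plan is to derive the Lagrangian dual of \ref{EqProbWSVMP} and verify that it coincides with \ref{EqProbWSVMD}, following the same template used to obtain \ref{EqProbSVMD} from the unweighted primal. Since the only change is the scaling of $\rho$ by $\vti{i}$ in the inequality constraint, the derivation will be almost mechanical and the novelty lies solely in tracking how the $\vti{i}$ factors propagate to the dual constraint.

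Concretely, I would start by writing the Lagrangian
\begin{equation*}
 \lagr(\vw, \rho, \xi, \va) = \tfrac{1}{2}\nt{\vw}^2 - \rho + \tfrac{C}{2}\sum_{i=1}^\npat \xi_i^2 - \sum_{i=1}^\npat \vai{i}\prn{\vw^\tr \mzi{i} - \vti{i}\rho + \xi_i} \eeq{,}
\end{equation*}
with multipliers $\vai{i} \ge 0$. The primal variables $(\vw, \rho, \xi)$ are unconstrained, so stationarity gives in turn $\vw = \sum_i \vai{i}\mzi{i}$, $\sum_i \vti{i}\vai{i} = 1$ (this is where the weights enter, replacing the usual simplex constraint $\sum_i \vai{i}=1$), and $\xi_i = \vai{i}/C$. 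Substituting back, the $\rho$-term cancels because $\sum_i \vti{i}\vai{i}\rho - \rho = 0$, the quadratic in $\vw$ becomes $\tfrac{1}{2}\va^\tr \mz\mz^\tr \va$, and the slack terms collapse to $-\tfrac{1}{2C}\va^\tr\va$, so that $\lagr$ reduces to $-\tfrac{1}{2}\va^\tr\prn{\mz\mz^\tr + \tfrac{1}{C}\iden{\npat}}\va = -\tfrac{1}{2}\va^\tr \mkh \va$.

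Maximizing this concave quadratic over $\vai{i}\ge 0$, $\sum_i \vti{i}\vai{i}=1$ is equivalent (flipping sign and dropping the harmless factor $\tfrac{1}{2}$) to the minimization in \ref{EqProbWSVMD}; the upper bound $\vai{i}\le 1/\vti{i}$ is implied by nonnegativity together with the equality constraint, so no additional box constraints need to be stated. To close the argument I would invoke strong duality, which holds because \ref{EqProbWSVMP} is a convex quadratic program with affine constraints and a strictly feasible point (e.g.\ $\vw = \vze$, $\rho = 0$, $\xi_i = 0$), so Slater's condition is satisfied and the primal and dual optimal values agree.

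The only step requiring mild care is the treatment of the $\rho$ variable: since it is unconstrained, stationarity in $\rho$ yields an equality constraint in the dual, and I want to make sure that the $\vti{i}$ factors appear on the correct side so that the simplex-like constraint $\sum_i \vti{i}\vai{i}=1$ emerges rather than, say, $\sum_i \vai{i} = \vti{i}$-scaled. Beyond that bookkeeping, the proof is routine Lagrangian duality for a convex quadratic program and should fit in a few lines.
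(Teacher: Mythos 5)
Your derivation is correct and follows essentially the same route as the paper: form the Lagrangian of \ref{EqProbWSVMP}, use stationarity in $\vw$, $\rho$, $\xi$ to obtain $\vw = \mz\va$, $\sum_i \vti{i}\vai{i} = 1$ and $\xi = \va/C$, and substitute back to recover the quadratic objective $\va^\tr\mkh\va$ over the weighted simplex. The extra strong-duality remark is harmless but unnecessary for the statement as posed (and note your candidate point $(\vze,0,\vze)$ satisfies the constraints only with equality; since the constraints are affine, plain feasibility already suffices), and the paper itself omits it.
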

\begin{proof}
 The Lagrangian of \ref{EqProbWSVMP} is:
 \begin{equation*}
  \lagr\prn{\vw, \rho, \xi ; \va} = \frac{1}{2} \nt{\vw}^2 - \rho + \frac{C}{2} \sum_{i = 1}^\npat {\xi_i^2} + \sum_{i = 1}^\npat {\vai{i} \prn{- \vw^\tr \mzi{i} + \vti{i} \rho - \xi_i}} \eeq{,}
 \end{equation*}
 with derivatives with respect to the primal variables:
 \begin{align*}
  \deriv{\lagr}{\vw} &= \vw - \mz \va = 0 && \implies && \vw = \mz \va \eeq{;} \\
  \deriv{\lagr}{\rho} &= - 1 + \sum_{i = 1}^\npat {\vti{i} \vai{i}} = 0 && \implies && \sum_{i = 1}^\npat {\vti{i} \vai{i}} = 1 \eeq{;} \\
  \deriv{\lagr}{\xi} &= C \xi - \va = 0 && \implies && \xi = \frac{1}{C} \va \eeq{.}
 \end{align*}
 Substituting into the Lagrangian, the following objective function for the dual problem arises:
 \begin{equation*}
  \frac{1}{2} \nt{\mz \va}^2 - \rho + \frac{C}{2 C^2} \nt{\va}^2 - \nt{\mz \va}^2 + \rho \sum_{i = 1}^\npat {\vti{i} \vai{i}} - \frac{1}{C} \nt{\va}^2 = - \frac{1}{2} \nt{\mz \va}^2 - \frac{1}{2 C} \nt{\va}^2 \eeq{.}
 \end{equation*}
 Hence, the resulting dual problem is:
 \begin{equation*}
  \minpc{\va \in \Rp}{\nt{\mz \va}^2 + \frac{1}{C} \nt{\va}^2}{0 \le \vai{i} \sepcons \sum_{i = 1}^\npat {\vti{i} \vai{i}} = 1} \eeq{,}
 \end{equation*}
 which coincides with \ref{EqProbWSVMD}.
\end{proof}

Therefore, the effect of increasing the scaling factor $\vti{i}$ in the \wsvm{} dual formulation is equivalent to increasing the margin required for the $i$-th pattern in the primal formulation. Thus, intuitively an increase of $\vti{i}$ should facilitate the $i$-th pattern to become a support vector.
This influence is numerically illustrated in \ref{FigWSVMEvo}, where the value of one weight $\vti{i}$ is varied to analyse its influence over the corresponding multiplier $\vai{i}$ in a binary classification problem with $\npat = 100$ and $\ndim = 2$. The other weights are just fixed equal to one, but before solving the problem all the vector $\vt$ is normalized so that its maximum is still equal to one in order to preserve the scale. This experiment is done for three different values of $C$ (\num{e-3}, \num{1e0} and \num{e3}) and for the weights corresponding to the maximum, minimum and an intermediate value of the multiplier of the standard (unweighted) SVM. Clearly $\vti{i}$ and $\vai{i}$ present a proportional relationship, so the larger $\vti{i}$ is, the larger the obtained multiplier $\vai{i}$ becomes (until some point of saturation), confirming the initial intuition.

\begin{mfigure}{\label{FigWSVMEvo} Evolution of the SVM coefficient $\vai{}$ with respect to the weight $\vti{}$, for $C$ equal to \num{e-3} (first row), \num{e0} (second row) and \num{e3} (third row), and for the patterns corresponding to the maximum (first column), an intermediate (second column) and the minimum (third column) initial value of $\vai{}$.}
\colorlet{mycolor1}{graphic1}%
 \tikzwidth{0.3\textwidth}%
 \begin{footnotesize}
 \begin{tabular}{ccc}
  \ifusetikz\begin{minipage}{\figwidth}\vspace{0pt}\scriptsize\tikzsetnextfilename{ExampleEvolutionMax1e-03}{\tikzset{trim axis left,trim axis right}\input{./ExampleEvolutionMax1e-03.tikz}}\end{minipage}\else\includegraphics{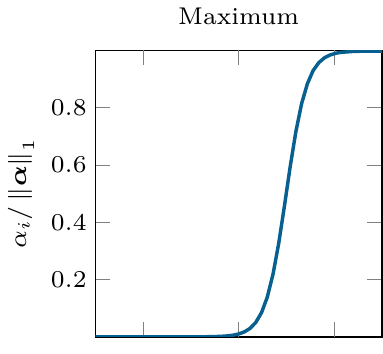}\fi &
  \ifusetikz\begin{minipage}{\figwidth}\vspace{0pt}\scriptsize\tikzsetnextfilename{ExampleEvolutionInt1e-03}{\tikzset{trim axis left,trim axis right}\input{./ExampleEvolutionInt1e-03.tikz}}\end{minipage}\else\includegraphics{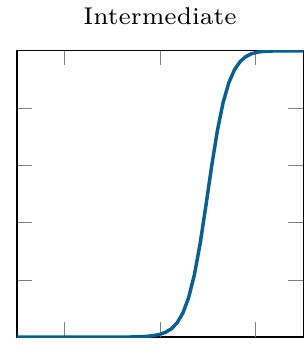}\fi &
  \ifusetikz\begin{minipage}{\figwidth}\vspace{0pt}\scriptsize\tikzsetnextfilename{ExampleEvolutionMin1e-03}{\tikzset{trim axis left,trim axis right}\input{./ExampleEvolutionMin1e-03.tikz}}\end{minipage}\else\includegraphics{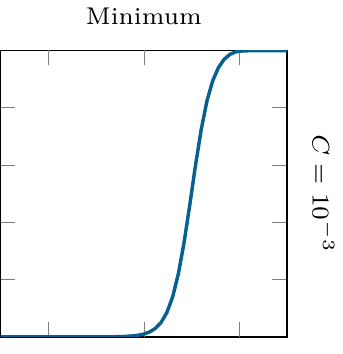}\fi \\
  \ifusetikz\begin{minipage}{\figwidth}\vspace{0pt}\scriptsize\tikzsetnextfilename{ExampleEvolutionMax1e+00}{\tikzset{trim axis left,trim axis right}\input{./ExampleEvolutionMax1e+00.tikz}}\end{minipage}\else\includegraphics{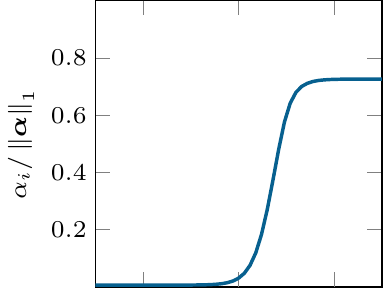}\fi &
  \ifusetikz\begin{minipage}{\figwidth}\vspace{0pt}\scriptsize\tikzsetnextfilename{ExampleEvolutionInt1e+00}{\tikzset{trim axis left,trim axis right}\input{./ExampleEvolutionInt1e+00.tikz}}\end{minipage}\else\includegraphics{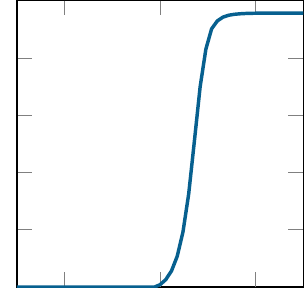}\fi &
  \ifusetikz\begin{minipage}{\figwidth}\vspace{0pt}\scriptsize\tikzsetnextfilename{ExampleEvolutionMin1e+00}{\tikzset{trim axis left,trim axis right}\input{./ExampleEvolutionMin1e+00.tikz}}\end{minipage}\else\includegraphics{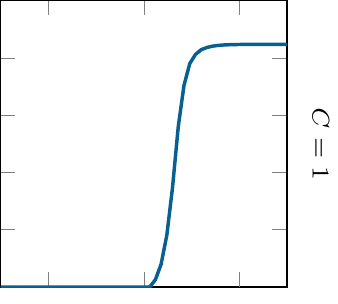}\fi\\
  \ifusetikz\begin{minipage}{\figwidth}\vspace{0pt}\scriptsize\tikzsetnextfilename{ExampleEvolutionMax1e+03}{\tikzset{trim axis left,trim axis right}\input{./ExampleEvolutionMax1e+03.tikz}}\end{minipage}\else\includegraphics{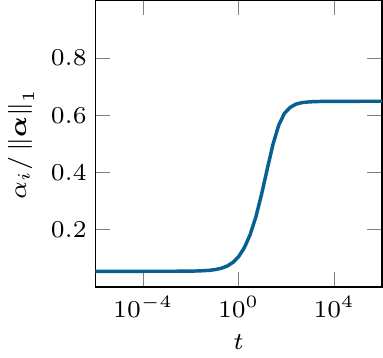}\fi &
  \ifusetikz\begin{minipage}{\figwidth}\vspace{0pt}\scriptsize\tikzsetnextfilename{ExampleEvolutionInt1e+03}{\tikzset{trim axis left,trim axis right}\input{./ExampleEvolutionInt1e+03.tikz}}\end{minipage}\else\includegraphics{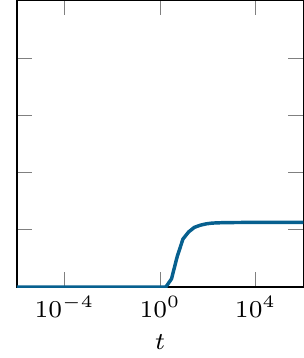}\fi &
  \ifusetikz\begin{minipage}{\figwidth}\vspace{0pt}\scriptsize\tikzsetnextfilename{ExampleEvolutionMin1e+03}{\tikzset{trim axis left,trim axis right}\input{./ExampleEvolutionMin1e+03.tikz}}\end{minipage}\else\includegraphics{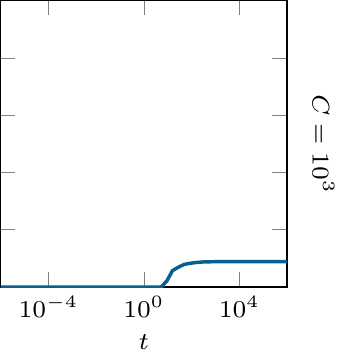}\fi
 \end{tabular}
 \end{footnotesize}
\end{mfigure}

As another illustration, \ref{FigWSVMFeasible} shows a small toy example of three patterns, which allows to represent the feasible set in two dimensions as the convex hull of the three vertices. The value of one weight $\vti{i}$ is changed in the set $\set{\num{e-2}, \num{e-1}, \num{e0}, \num{e1}, \num{e2}}$, whereas the other two weights are kept fixed to $1$. As before, increasing the weight pushes the solution towards the corresponding pattern.
Moreover, the last row in \ref{FigWSVMFeasible} shows the same example but with a three dimensional representation, so that it is more clear the effect of decreasing $\vti{1}$ in the feasible set, basically lengthening the triangle and increasing its angle with respect to the horizontal plane, until the point where the triangle becomes an unbounded rectangle ($\vti{1} = 0$) completely vertical. Taking into consideration that the solution of the unconstrained problem (for $C \neq \infty$) is the origin, decreasing $\vti{1}$ is moving away the first vertex from the unconstrained solution, thus making less likely to assign a non-zero coefficient to that point unless it really decreases the objective function.

{
 \pgfplotscreateplotcyclelist{mycolorlist}{%
graphic5,mark size=1pt,mark=*\\%
 }
 \pgfplotsset{xtick=\empty,ytick=\empty,ztick=\empty,axis on top=true,enlargelimits=false,every axis/.append style={axis lines=center,ylabel style={anchor=north east,inner sep=1pt,outer sep=1pt},zlabel style={anchor=north west,inner sep=1pt,outer sep=1pt},xlabel style={anchor=south,inner sep=1pt,outer sep=1pt}},y={(-0.4\textwidth,-0.230940108\textwidth)},z={(0.4\textwidth,-0.230940108\textwidth)},x={(0,0.461880215\textwidth)},cycle list name=mycolorlist}
 \renewcommand{\plotline}[1]{%
  \tikzset{external/export next=false}%
  \begin{tikzpicture}[]
    \begin{axis}[hide axis, scale only axis,width=5pt, height=5pt, xmin=0, xmax=1, ymin=0, ymax=2, cycle list name=mycolorlist, cycle list shift=#1,x={},y={},z={}]
      \addplot
      coordinates {
      (0.5,1)
      };
    \end{axis}
  \end{tikzpicture}}
\renewcommand{\showlegend}[1]{[\,\raisebox{\height}{\plotline{#1}}\,]}
\begin{mfigurec}{\label{FigWSVMFeasible} Example of the feasible region and the solution for a problem with three patterns, for different values of the weighting vector $\vt$. For each plot, the value of $\vt$ is shown above in boldface. The three rows correspond to changes in $\vti{1}$, $\vti{2}$ and $\vti{3}$ respectively, and the weighted probability simplex is represented as the convex hull of the three vertices. The fourth row corresponds again to changes in $\vti{1}$ but with a $3$-dimensional representation keeping the same aspect ratio for all the axis, and also including the limit case $\vti{1} = 0$ where $\vai{1}$ is not upper bounded. The solution of the constrained optimization problem is shown with a red dot~\showlegend{0}.}
 \tikzwidth{0.125\textwidth}%
\scriptsize
\ifusetikz
 \begin{tabular}{*5{@{\hspace{10pt}}c@{\hspace{10pt}}}}
\else
 \begin{tabular}{*5c}
\fi
  \toprule
  \tabformathrow{$\prns{1, 1, \num{e-2}}$} & \tabformathrow{$\prns{1, 1, \num{e-1}}$} & \tabformathrow{$\prns{1, 1, \num{e0}}$} & \tabformathrow{$\prns{1, 1, \num{e1}}$} & \tabformathrow{$\prns{1, 1, \num{e2}}$} \\\midrule
  \ifusetikz\begin{minipage}{\figwidth}\vspace{0pt}\scriptsize\tikzsetnextfilename{FeasibleSet-1-1}{\tikzset{trim axis left,trim axis right}\input{./FeasibleSet-1-1.tikz}}\end{minipage}\else\includegraphics{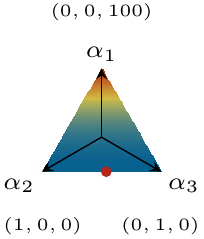}\fi & \ifusetikz\begin{minipage}{\figwidth}\vspace{0pt}\scriptsize\tikzsetnextfilename{FeasibleSet-2-1}{\tikzset{trim axis left,trim axis right}\input{./FeasibleSet-2-1.tikz}}\end{minipage}\else\includegraphics{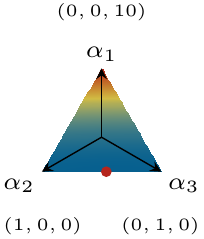}\fi & \ifusetikz\begin{minipage}{\figwidth}\vspace{0pt}\scriptsize\tikzsetnextfilename{FeasibleSet-3-1}{\tikzset{trim axis left,trim axis right}\input{./FeasibleSet-3-1.tikz}}\end{minipage}\else\includegraphics{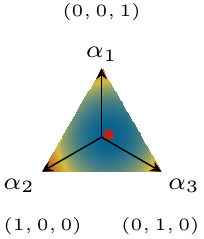}\fi & \ifusetikz\begin{minipage}{\figwidth}\vspace{0pt}\scriptsize\tikzsetnextfilename{FeasibleSet-4-1}{\tikzset{trim axis left,trim axis right}\input{./FeasibleSet-4-1.tikz}}\end{minipage}\else\includegraphics{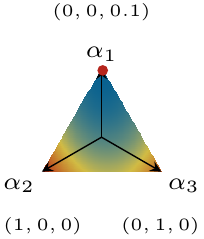}\fi & \ifusetikz\begin{minipage}{\figwidth}\vspace{0pt}\scriptsize\tikzsetnextfilename{FeasibleSet-5-1}{\tikzset{trim axis left,trim axis right}\input{./FeasibleSet-5-1.tikz}}\end{minipage}\else\includegraphics{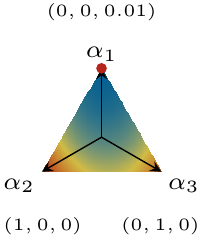}\fi \\[5pt]\toprule
  \tabformathrow{$\prns{1, \num{e-2}, 1}$} & \tabformathrow{$\prns{1, \num{e-1}, 1}$} & \tabformathrow{$\prns{1, \num{e0}, 1}$} & \tabformathrow{$\prns{1, \num{e1}, 1}$} & \tabformathrow{$\prns{1, \num{e2}, 1}$} \\\midrule
  \ifusetikz\begin{minipage}{\figwidth}\vspace{0pt}\scriptsize\tikzsetnextfilename{FeasibleSet-1-2}{\tikzset{trim axis left,trim axis right}\input{./FeasibleSet-1-2.tikz}}\end{minipage}\else\includegraphics{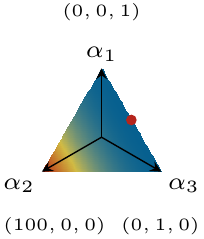}\fi & \ifusetikz\begin{minipage}{\figwidth}\vspace{0pt}\scriptsize\tikzsetnextfilename{FeasibleSet-2-2}{\tikzset{trim axis left,trim axis right}\input{./FeasibleSet-2-2.tikz}}\end{minipage}\else\includegraphics{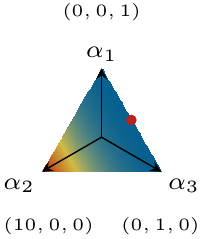}\fi & \ifusetikz\begin{minipage}{\figwidth}\vspace{0pt}\scriptsize\tikzsetnextfilename{FeasibleSet-3-2}{\tikzset{trim axis left,trim axis right}\input{./FeasibleSet-3-2.tikz}}\end{minipage}\else\includegraphics{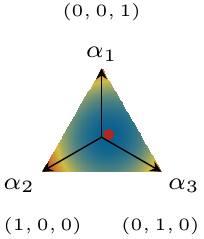}\fi & \ifusetikz\begin{minipage}{\figwidth}\vspace{0pt}\scriptsize\tikzsetnextfilename{FeasibleSet-4-2}{\tikzset{trim axis left,trim axis right}\input{./FeasibleSet-4-2.tikz}}\end{minipage}\else\includegraphics{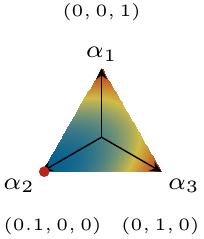}\fi & \ifusetikz\begin{minipage}{\figwidth}\vspace{0pt}\scriptsize\tikzsetnextfilename{FeasibleSet-5-2}{\tikzset{trim axis left,trim axis right}\input{./FeasibleSet-5-2.tikz}}\end{minipage}\else\includegraphics{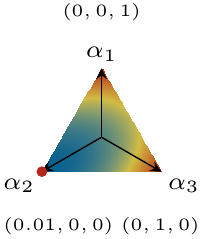}\fi \\\toprule
  \tabformathrow{$\prns{\num{e-2}, 1, 1}$} & \tabformathrow{$\prns{\num{e-1}, 1, 1}$} & \tabformathrow{$\prns{\num{e0}, 1, 1}$} & \tabformathrow{$\prns{\num{e1}, 1, 1}$} & \tabformathrow{$\prns{\num{e2}, 1, 1}$} \\\midrule
  \ifusetikz\begin{minipage}{\figwidth}\vspace{0pt}\scriptsize\tikzsetnextfilename{FeasibleSet-1-3}{\tikzset{trim axis left,trim axis right}\input{./FeasibleSet-1-3.tikz}}\end{minipage}\else\includegraphics{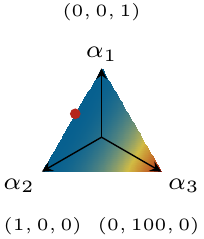}\fi & \ifusetikz\begin{minipage}{\figwidth}\vspace{0pt}\scriptsize\tikzsetnextfilename{FeasibleSet-2-3}{\tikzset{trim axis left,trim axis right}\input{./FeasibleSet-2-3.tikz}}\end{minipage}\else\includegraphics{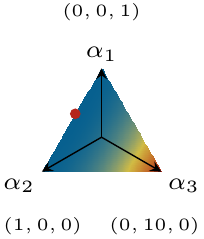}\fi & \ifusetikz\begin{minipage}{\figwidth}\vspace{0pt}\scriptsize\tikzsetnextfilename{FeasibleSet-3-3}{\tikzset{trim axis left,trim axis right}\input{./FeasibleSet-3-3.tikz}}\end{minipage}\else\includegraphics{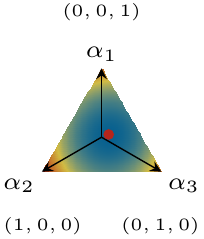}\fi & \ifusetikz\begin{minipage}{\figwidth}\vspace{0pt}\scriptsize\tikzsetnextfilename{FeasibleSet-4-3}{\tikzset{trim axis left,trim axis right}\input{./FeasibleSet-4-3.tikz}}\end{minipage}\else\includegraphics{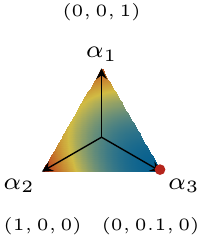}\fi & \ifusetikz\begin{minipage}{\figwidth}\vspace{0pt}\scriptsize\tikzsetnextfilename{FeasibleSet-5-3}{\tikzset{trim axis left,trim axis right}\input{./FeasibleSet-5-3.tikz}}\end{minipage}\else\includegraphics{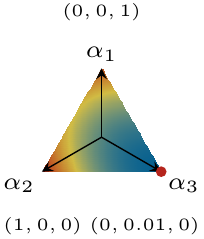}\fi \\\toprule
  \tabformathrow{$\prns{\num{4}, 1, 1}$} & \tabformathrow{$\prns{\num{2}, 1, 1}$} & \tabformathrow{$\prns{\num{1}, 1, 1}$} & \tabformathrow{$\prns{\num{0.5}, 1, 1}$} & \tabformathrow{$\prns{0, 1, 1}$} \\\midrule
  \ifusetikz\begin{minipage}{\figwidth}\vspace{0pt}\scriptsize\tikzsetnextfilename{FeasibleSetUnboundedEvo-1}{\tikzset{trim axis left,trim axis right}\input{./FeasibleSetUnboundedEvo-1.tikz}}\end{minipage}\else\includegraphics{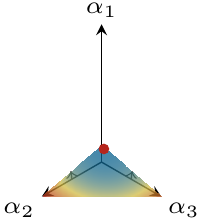}\fi & \ifusetikz\begin{minipage}{\figwidth}\vspace{0pt}\scriptsize\tikzsetnextfilename{FeasibleSetUnboundedEvo-2}{\tikzset{trim axis left,trim axis right}\input{./FeasibleSetUnboundedEvo-2.tikz}}\end{minipage}\else\includegraphics{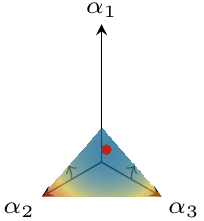}\fi & \ifusetikz\begin{minipage}{\figwidth}\vspace{0pt}\scriptsize\tikzsetnextfilename{FeasibleSetUnboundedEvo-3}{\tikzset{trim axis left,trim axis right}\input{./FeasibleSetUnboundedEvo-3.tikz}}\end{minipage}\else\includegraphics{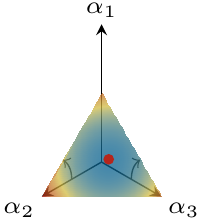}\fi & \ifusetikz\begin{minipage}{\figwidth}\vspace{0pt}\scriptsize\tikzsetnextfilename{FeasibleSetUnboundedEvo-4}{\tikzset{trim axis left,trim axis right}\input{./FeasibleSetUnboundedEvo-4.tikz}}\end{minipage}\else\includegraphics{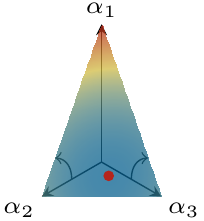}\fi & \ifusetikz\begin{minipage}{\figwidth}\vspace{0pt}\scriptsize\tikzsetnextfilename{FeasibleSetUnboundedEvo-5}{\tikzset{trim axis left,trim axis right}\input{./FeasibleSetUnboundedEvo-5.tikz}}\end{minipage}\else\includegraphics{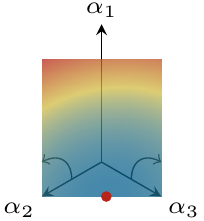}\fi \\\bottomrule
 \end{tabular}
\end{mfigurec}
}

It is mandatory to state the differences between the \wsvm{} proposed here and the previous model proposed in~\cite{Lapin2014}. First of all, the formulations over which both approach are based are different.
But, even if the same starting SVM model were used, both weighting schemes are essentially different:
\begin{itemize}
 \item Lapin \emph{et al.} propose a modification of the primal SVM formulation so that the cost associated is different for each pattern. This means that the loss associated to that pattern is multiplied by a constant.
 \item The \wsvm{} proposed here introduces the weights directly into the dual problem, what results into a modification of the margin required for each pattern in the primal problem. Considering again the loss associated to each pattern, the ``insensitivity'' zone (the region of predictions that are associated to a zero loss) is widened or narrowed according to a constant.
\end{itemize}
Hence, both approaches are fundamentally different, and the effects that they produce are not equivalent.

\subsection{Re-Weighted SVM}

Once \ref{EqProbWSVMD} has been defined, and provided that the scaling factors seem to influence the sparsity of the solution (as illustrated in \ref{FigWSVMEvo,FigWSVMFeasible}), a procedure to set the weighting vector $\vt$ is needed.

In parallelism with the original \rwla{}, but considering that in this case the relation between the weight $\vti{i}$ and the corresponding optimal multiplier $\vai{i}$ is directly proportional, the following iterative approach, namely Re-Weighted SVM (\rwsvm{}), arises naturally:
\begin{enumerate}
 \item At iteration $k$, the following \wsvm{} problem is solved:
  \begin{equation}
  \label[problem]{EqProbRWSVMD}
  \vaok{k} = \argminpcd{\va \in \Rp}{\va^\tr \mkh \va}{0 \le \vai{i} \le 1 \sepcons \sum_{i=1}^{\npat} {\vtik{i}{k} \vai{i}} = 1 }
 \end{equation}
 \item The weighting vector for the next iteration, $\vtk{k}$, is updated as:
 \begin{equation*}
  \vtik{i}{k + 1} = \fmon\prn{\vaiok{i}{k}} \eeq{,}
 \end{equation*}
 where $\fmon: \R \to \R$ is some monotone function.
\end{enumerate}

This approach has two main drawbacks.
The first one is how to select the function $\fmon$. This also implies selecting some minimum and maximum values to which the weights $\vtik{i}{k}$ should saturate, so it is not a trivial task, and it can greatly influence the behaviour of the model.
The second drawback is that this approach requires to solve \ref{EqProbRWSVMD} at each iteration, which means training completely a \wsvm{} model (with a complexity that should not differ from that of training a standard SVM) on each iteration, and hence the overall computational cost can be much larger. Although this is in fact an affordable drawback if the objective is solely to approach the \lz{} norm as it was the case in the original paper of \rwla{}~\cite{Candes2008}, in the case of the SVM the aim is to get sparser models in order to reduce their complexity and to improve the performance specially in large datasets, and hence it does not make sense to need for this several iterations.

As a workaround, the next section proposes an online modification of the weights that leads to a simple modification of the training algorithm for SVMs.

\section{Modified Frank--Wolfe Algorithm}
\label{SecMFW}

This section proposes a training algorithm to get sparser SVMs, which is based on an online modification of the weighting vector $\vt$ of a \wsvm{} model.
In particular, the basis of this proposal is the Frank--Wolfe optimization algorithm.

\subsection{Frank--Wolfe Algorithm}

The Frank--Wolfe algorhtm (\fw{};~\cite{Frank1956}) is a first order optimization method for constrained convex optimization. There are several versions of this algorithm, in particular the basis of this work is the Pairwise Frank--Wolfe~\cite{Jaggi2013,Lacoste-Julien2015}.
Roughly speaking, it is based on using at each iteration a linear approximation of the objective function to select one of the vertices as the target towards which the current estimate of the solution will move (the \emph{forward} node), and another vertex as that from which the solution will move away (the \emph{away} node), and then updating the solution in the direction that goes from the \emph{away} node to the \emph{forward} one using the optimal step length. At the end, the linear approximation boils down to selecting the node corresponding to the smallest partial derivative as the \emph{forward} node, and that with the largest derivative as the \emph{away} node.

This general algorithm can be used in many different contexts, and in particular it has been succesfully applied to the training of SVMs~\cite{Gaertner2009,Ouyang2010,Frandi2013}. Specifically, for the case of \ref{EqProbSVMD}, the following definitions and results are employed.

Let $\fobj$ denote the (scaled) objective function of \ref{EqProbSVMD} (or \ref{EqProbWSVMD}), with gradient and partial derivatives:
\begin{align}
 \fobj\prn{\va} &= \frac{1}{2} \va^\tr \mkh \va \eeq{;} \label{Eqfobj} \\
 \nabla \fobj\prn{\va} &= \mkh \va \eeq{;} \nonumber \\
 \frac{\partial \fobj}{\partial \vai{i}}\prn{\va} &= \mkhi{i}^\tr \va \eeq{,} \label{Eqgobj}
\end{align}
where $\mkhi{i}^\tr$ is the $i$-th row of $\mkh$.
Let $\vdir$ denote the direction in which the current solution will be updated. The optimal step-size can be computed by solving the problem:
\begin{equation}
 \label[problem]{EqProbStep}
 \minp{\stpg}{\fobj\prn{\va + \stpg \vdir}} \eeq{,}
\end{equation}
and truncating the optimal step, if needed, in order to remain in the convex hull of the nodes, i.e., to satisfy the constraints of \ref{EqProbSVMD} (or, equivalently, of \ref{EqProbWSVMD}).
Straightforwardly, \ref{EqProbStep} can be solved simply taking the derivative with respect to $\stpg$ and making it equal to zero:
\begin{align*}
 \frac{\partial \fobj}{\partial \stpg}\prn{\va + \stpg \vdir} &= \vdir^\tr \mkh \prn{\va + \stpg \vdir} = 0 \\
 \implies \stpg &= - \frac{\vdir^\tr \mkh \va}{\vdir^\tr \mkh \vdir} \eeq{.}
\end{align*}
It should be noticed that $\mkh \va$ is the gradient of $\fobj$ at the point $\va$, and thus there is no need to compute it again (indeed, the gradient times the direction is minus the \fw{} gap, that can be used as a convergence indicator). Moreover, if the direction $\vdir = \sum_{i \in \upd}{\vdiri{i} \bas{i}}$ is sparse, then $\mkh \vdir = \sum_{i \in \upd} \vdiri{i} \mkhi{i}$ only requires to compute the columns of the kernel matrix corresponding to the set of updated variables $\upd$. In particular, in the Pairwise \fw{} only the columns of the \emph{forward} and \emph{away} nodes are used to determine $\stpg$ and to keep the gradient updated.

The whole procedure for applying \fw{} to the SVM training is summarized in \ref{AlgFW}.

\begin{malgorithm}{\label{AlgFW} Pairwise Frank--Wolfe algorithm for SVM.}
\begin{algorithmic}[1]
 \Procedure{TrainSVM}{$\mkh, \epsilon$} \Comment{\textbullet~Kernel $\mkh \in \Rpp$. \\\textbullet~Precision $\epsilon \in \R$.}
  \algorithmiccommentb{Initialization.}
  \Take $i_0 \in \set{1, \dotsc, \npat}$ \Comment{Initial vertex.}
  \State $\va \gets \bas{i_0}$ \Comment{Initial point.}
  \State $\vg \gets \vkhi{i_0}$ \Comment{Initial gradient.}
  \Repeat \Comment{Main Loop.}
   \algorithmiccommentb{Update of Coefficients.}
   \State $s \gets \argmin_{i}{\vgi{i}}$ \Comment{Select forward node.}
   \State $v \gets \argmax_{i}{\vgi{i}}$ \Comment{Select away node.}
   \State $\vdir \gets \bas{s} - \bas{v}$ \Comment{Build update direction.}
   \State $\gap \gets - \vg \cdot \vdir$ \Comment{\fw{} gap.}
   \State $\stpg \gets \minme{\maxme{\gap/\prns{\vdir^\tr \mkh \vdir}, 0}, \vai{v}}$ \Comment{Compute step length.}
   \State $\va \gets \va + \stpg \vdir$ \Comment{Point update.}
   \State $\vg \gets \vg + \stpg \mkhi{s} - \stpg \mkhi{v}$ \Comment{Gradient update.}
  \Until{$\gap \le \epsilon$} \Comment{Stopping criterion.}
 \EndProcedure 
\end{algorithmic}
\end{malgorithm}

\subsection{Modified Frank--Wolfe Algorithm}
\label{SWSVM}

The idea of the proposed Modified Frank--Wolfe (\mfw{}) is to modify the weights $\vti{i}$, i.e., the margin required for each training pattern, directly on each inner iteration of the algorithm, hence with an overall cost similar to that of the original \fw{}.
In particular, and since according to \ref{FigWSVMEvo,FigWSVMFeasible} the relation between each weight and the resulting coefficient seems to be directly proportional, an incremental procedure with binary weights is defined, leading to a new training algorithm for SVM. Specifically, the training vectors will be divided into two groups, the working vectors, with a weight $\vti{i} = 1$, and the idle vectors, with a weight $\vti{i} = 0$.
The proposed \mfw{} will start with only one initial working vector, and at each iteration, the idle vector with the smaller negative gradient (if there is any) will be add to the working set. After that, the coefficients of the working vectors will be updated by using a standard \fw{} pair-wise step.

The intuition behind this algorithm is the following.
The standard \fw{} algorithm applied to the SVM training will activate (make non-zero) the coefficient of a certain vector if its partial derivative is better (smaller) than that of the already active coefficients, i.e., if that vector is ``less bad'' than the others.
On the other side, the \mfw{} will only add a coefficient to the working set if its partial derivative is negative (hence, that coefficient would also be activated without the simplex constraint), i.e., the vector has to be somehow ``good'' by itself.

In what follows, the \mfw{} algorithm is described in more detail.

\subsubsection{Preliminaries}

The set of working vectors is denoted by $\act = \setst{i}{1 \le i \le \npat, \vti{i} = 1}$, and that of idle vectors as $\ina = \setst{i}{1 \le i \le \npat, \vti{i} = 0}$.
The dual problem becomes:
\begin{equation*}
\label[problem]{EqProbWSVMB}
 \minpc{\va \in \Rp}{\va^\tr \mkh \va}{0 \le \vai{i} \sepcons \sum_{i \in \act} {\vai{i}} = 1} \eeq{.}
\end{equation*}
Thus, the coefficients for the points in $\act$ have to belong to the probability simplex of dimension $\abs{\act}$, whereas the coefficients for $\ina$ only have a non-negative constraint.

\subsubsection{Algorithm}

The proposed \mfw{} algorithm to train an SVM is summarized in \ref{AlgMFW}.
This algorithm is very similar to \ref{AlgFW}, except for the initialization and control of the working set in \ref{LNactini,LNactBeg,LNacta,LNactb,LNactc,LNactd,LNacte,LNactf,LNactEnd}, the search for the \emph{forward} and \emph{away} nodes of \ref{LNaway,LNforward} (which is done only over the working set) and the stopping criterion of \ref{LNstop} (which requires both that the dual gap is small enough and that no new vertices have been activated).

\begin{malgorithm}{\label{AlgMFW} Modified Frank--Wolfe algorithm for SVM.}
\begin{algorithmic}[1]
 \Procedure{Train\isvmname{}}{$\mkh, \epsilon$} \Comment{\textbullet~Kernel $\mkh \in \Rpp$. \\\textbullet~Precision $\epsilon \in \R$.}
  \algorithmiccommentb{Initialization.}
  \Take $i_0 \in \set{1, \dotsc, \npat}$ \Comment{Initial vertex.}
  \State $\act \gets \set{i_0}$ \Comment{Initial working set.}\label{LNactini}
  \State $\va \gets \bas{i_0}$ \Comment{Initial point.}
  \State $\vg \gets \vkhi{i_0}$ \Comment{Initial gradient.}
  \Repeat \Comment{Main Loop.}
   \algorithmiccommentb{Activation of Coefficients.}
   \If {$\abs{\act} < \npat$}\label{LNactBeg}
    \State $\fchn \gets \text{false}$ \Comment{Flag for changes.}\label{LNacta}
    \State $u \gets \argmin_{i \in \ina}{\vgi{i}}$ \Comment{Select node.}\label{LNactb}
    \If {$\vgi{u} < 0$}\label{LNactc}
     \State $\act \gets \act \cup \set{u}$ \Comment{Activate node.}\label{LNactd}
     \State $\fchn \gets \text{true}$ \Comment{Mark change.}\label{LNacte}
    \EndIf\label{LNactf}
   \EndIf\label{LNactEnd}
   \algorithmiccommentb{Update of Working Coefficients.}
   \State $s \gets \argmin_{i \in \act}{\vgi{i}}$ \Comment{Select forward node.}\label{LNforward}
   \State $v \gets \argmax_{i \in \act}{\vgi{i}}$ \Comment{Select away node.}\label{LNaway}
   \State $\vdir \gets \bas{s} - \bas{v}$ \Comment{Build update direction.}
   \State $\gap \gets - \vg \cdot \vdir$ \Comment{\fw{} gap.}
   \State $\stpg \gets \minme{\maxme{\gap/\prns{\vdir^\tr \mkh \vdir}, 0}, \vai{v}}$ \Comment{Compute step length.}
   \State $\va \gets \va + \stpg \vdir$ \Comment{Point update.}
   \State $\vg \gets \vg + \stpg \mkhi{s} - \stpg \mkhi{v}$ \Comment{Gradient update.}
  \Until{$\gap \le \epsilon$ and not $\fchn$} \Comment{Stopping criterion.}\label{LNstop}
 \EndProcedure 
\end{algorithmic}
\end{malgorithm}

\subsubsection{Convergence}

Regarding the convergence of the \mfw{} algorithm, the following theorem states that this algorithm will provide a model that is equivalent to a standard SVM model trained only over a subsample\footnote{Due to its sparse nature, an SVM is expressed only in terms of the support vectors. Nevertheless, the proposed \mfw{} provides an SVM trained over a subsample of the training set, although not all the vectors of this subsample have to become support vectors.} of the training patterns.
\begin{mtheor}
\label{TheoConvergence}
 \Ref{AlgMFW} converges to a certain vector $\opt{\va} \in \Rp$. In particular:
 \begin{enumerate}[(i)]
  \item The working set converges to a set $\acto$.
  \item The components of $\opt{\va}$ corresponding to $\acto$ conform the solution of the standard SVM \ref{EqProbSVMD} posed over the subset $\acto$ of the set of training patterns. The remaining components $\opt{\vai{i}}$, for $i \notin \acto$, are equal to zero.
 \end{enumerate}
\end{mtheor}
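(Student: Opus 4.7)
The plan is to exploit the monotonicity of the working set in order to reduce the analysis to a sub-problem on which a standard Frank--Wolfe convergence result applies. First I would observe that the activation block in \ref{LNactBeg}--\ref{LNactEnd} only ever grows $\act$: no line of \ref{AlgMFW} removes indices. Since $\act \subseteq \set{1, \dotsc, \npat}$ is a subset of a finite set, the sequence $\actk{k}$ must be eventually constant. Calling this limit $\acto$ and denoting by $k_0$ the first iteration beyond which $\actk{k} = \acto$ already settles claim~(i).

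Next I would analyse what the algorithm does for $k \ge k_0$. Since no new index is activated past $k_0$, the test on \ref{LNactc} fails at every such iteration, so $\vgi{i}^{(k)} \ge 0$ for every $i \notin \acto$ and every $k \ge k_0$. Furthermore, for $i \notin \acto$ the coefficient $\vai{i}^{(k)}$ has been zero throughout the entire run: the initial point is $\bas{i_0}$ with $i_0 \in \acto$, and every update direction $\vdir = \bas{s} - \bas{v}$ selected in \ref{LNforward}--\ref{LNaway} lies in the linear span of $\setst{\bas{i}}{i \in \actk{k}} \subseteq \setst{\bas{i}}{i \in \acto}$. Hence the full iterate is completely determined by its restriction $\va|_{\acto}$.

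On these restricted coordinates, \ref{AlgMFW} for $k \ge k_0$ coincides literally with the pairwise Frank--Wolfe algorithm of \ref{AlgFW} applied to the SVM dual \ref{EqProbSVMD} posed over $\acto$ with kernel submatrix $\brq{\mkh}_{\acto \acto}$: the forward/away search already ranges over $\acto$, the step-length computation is identical, and the simplex constraint $\sum_{i \in \acto} \vai{i} = 1$ is preserved because $\vdir = \bas{s} - \bas{v}$ has zero coordinate sum. Since $\mkh$ carries the $\frac{1}{C} \iden{\npat}$ summand it is strictly positive definite, so the restricted quadratic is strongly convex and the pairwise \fw{} convergence results of~\cite{Lacoste-Julien2015} apply: the restricted iterates converge to the unique minimiser of the restricted SVM and the \fw{} gap tends to zero. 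Defining $\opt{\va}$ as this limit, extended by zero on $\set{1, \dotsc, \npat} \setminus \acto$, gives claim~(ii).

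The main obstacle, in my view, is to justify cleanly that once $\actk{k}$ has stabilised the algorithm truly becomes standard pairwise Frank--Wolfe on the reduced simplex---in particular, that the step-length truncation never leaves the reduced feasible region and that the incrementally maintained gradient $\vg$ remains consistent with the restricted kernel (both facts follow because only coordinates in $\acto$ ever move, but they need to be stated). Once that identification is made, the remainder is a direct appeal to the existing convergence theory for pairwise \fw{} on a strongly convex quadratic over the probability simplex.
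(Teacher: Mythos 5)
Your proposal is correct and follows essentially the same route as the paper's own proof: monotonicity of the working set over a finite index set gives convergence of $\act$ to $\acto$, idle coefficients remain at their initial value of zero, and after stabilisation the iteration is identified with standard pairwise \fw{} on the SVM dual restricted to $\acto$, with convergence delegated to~\cite{Lacoste-Julien2015}. Your added remarks (non-negativity of the idle gradients after stabilisation, strong convexity from the $\frac{1}{C}\iden{\npat}$ term, preservation of the reduced simplex by the pairwise direction) are correct refinements that the paper leaves implicit, but they do not change the argument.
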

\begin{proof}
 {\quad}
 \begin{enumerate}[(i)]
  \item Let $\actk{k}$ denote the working set at iteration $k$. At iteration $k + 1$, the set $\actk{k + 1}$ will be either equal to $\actk{k}$ or equal to $\actk{k} \cup \set{u}$ for some $u \notin \actk{k}$. Hence, $\actk{k} \subseteq \actk{k + 1}$ for all $k$. Moreover, $\actk{k}$ is always a subset of the whole set of training vectors $\tset = \set{1, \cdots, \npat}$, i.e. $\actk{k} \subseteq \tset$ for all $k$. Since $\set{\actk{k}}$ is a monotone nondecreasing sequence of subsets of a finite set $\tset$, then $\actk{k} \uparrow \acto \subseteq \tset$, as proved next. Let $\set{k_1, \dotsc, k_{\npatp}}$ be those iterations in which the working set grows, $\actk{k_i} \subset \actk{k_i + 1}$. Obviously, the number of such $\npatp$ iterations is finite with $\npatp \le \npat$ since no more than $\npat$ elements can be added to the working set. Therefore, $\forall k \ge k_{\npatp}$, $\actk{k} = \actk{k_\npatp} = \acto \subseteq \tset$.
 \item Provided that $\actk{k} \subseteq \acto$ for all $k$, then for $i \notin \acto$ the corresponding coefficients will never be updated (they cannot be selected in \ref{LNforward,LNaway} of \ref{AlgMFW}), so they would conserve they initial value, i.e. $\vaik{i}{k} = 0$ for all $i \notin \acto$ and for all $k$.

 With respect to the convergence of $\vaik{i}{k}$ for $i \in \acto$, it suffices to consider the iterations after the convergence of the working set, $k \ge k_{\npatp}$. Let $\vak{k}_{\acto} \in \R^{\npatp}$ be the vector composed by the coefficients of the working patterns. Using \ref{Eqfobj} and since the coefficients of idle vectors are equal to zero (proved above):
 \begin{equation*}
  \fobj\prn{\vak{k}} = \sum_{i = 1}^\npat \sum_{j = 1}^\npat \vkhi{ij} \vaik{i}{k} \vaik{j}{k} = \sum_{i \in \acto} \sum_{j \in \acto} \vkhi{ij} \vaik{i}{k} \vaik{j}{k} = \fobj_{\acto}\prn{\vak{k}_{\acto}} \eeq{,}
 \end{equation*}
 where $\fobj_{\acto}$ denotes the objective function of \ref{EqProbSVMD} posed only over the subset $\acto$ of the original training set. A similar result can be obtained for the components of the gradient using \ref{Eqgobj}:
 \begin{equation*}
  \frac{\partial \fobj}{\partial \vai{i}}\prn{\vak{k}} = \sum_{j = 1}^\npat \vkhi{ij} \vaik{j}{k} = \sum_{j \in \acto}^\npat \vkhi{ij} \vaik{j}{k} = \frac{\partial \fobj_{\acto}}{\partial \vai{i}}\prn{\vak{k}_{\acto}} \eeq{.}
 \end{equation*}
 Therefore, once the working set has converged both the objective function and the partial derivatives of the working set computed in \ref{AlgMFW} are equal to those computed in \ref{AlgFW} when this algorithm is applied only over the vectors of the working set. Hence, in the remaining iterations \mfw{} reduces to the standard \fw{} algorithm but considering only the vertices in $\acto$, which converges to the solution of \ref{EqProbSVMD} over the subset $\acto$~\cite{Lacoste-Julien2015}.
 \end{enumerate}
\end{proof}

It is worth mentioning that, although the proposed \mfw{} algorithm converges to an SVM model trained over a subsample $\acto$ of the training data, this subsample will (as shown in \ref{SecExp}) depend on the initial point of the algorithm.

\section{Experiments}
\label{SecExp}

In this section the proposed \mfw{} algorithm will be compared with the standard \fw{} algorithm over several classification tasks.
In particular, the binary datasets that will be used for the experiments are described in \ref{TabDatasets}, which includes the size of the training and test sets, the number of dimensions and the percentage of the majority class (as a baseline accuracy). All of them belong to the LibSVM repository~\cite{Chang2011} except for \dmgam{} and \dmini{}, which belong to the UCI repository~\cite{Lichman2015}.

\begin{mtable}{\label{TabDatasets} Description of the datasets.}
 \begin{tabular}{c
    S[table-format=5]
    S[table-format=5]
    S[table-format=3]
    S[table-format=2.1]
    }
  \toprule
  \tabformathrow{Dataset} & \tabformathrow{Tr. Size} & \tabformathrow{Te. Size} & \tabformathrow{Dim.} & \tabformathrow{Maj. Class (\%)} \\
  \midrule
   \dijcn{} & 49990 & 91701 & 22 & 90.4 \\
\dmgam{} & 13020 & 6000 & 10 & 64.8 \\
\midrule
\daust{} & 621 & 69 & 14 & 55.5 \\
\dbrea{} & 615 & 68 & 10 & 65.0 \\
\ddiab{} & 692 & 76 & 8 & 65.1 \\
\dgerm{} & 900 & 100 & 24 & 70.0 \\
\dhear{} & 243 & 27 & 13 & 55.6 \\
\diono{} & 316 & 35 & 34 & 64.1 \\
\diris{} & 135 & 15 & 4 & 66.7 \\
\dmush{} & 7312 & 812 & 112 & 51.8 \\
\dsona{} & 188 & 20 & 60 & 53.4 \\
\midrule
\dmini{} & 100000 & 29596 & 50 & 71.8 \\

  \bottomrule
 \end{tabular}
\end{mtable}

\subsection{Preliminary Experiments}

The first experiments will be focused on the first two datasets of \ref{TabDatasets}, namely \dijcn{} and \dmgam{}, which are the largest ones except for \dmini{}.

\subsubsection{Set-Up}

The standard SVM model trained using \fw{} (\ssvm{}) and the model resulting from the proposed \mfw{} algorithm (denoted by \isvm{}, which as shown in \ref{TheoConvergence} is just an SVM trained over a subsample $\acto$ of the original training set) will be compared in terms of their accuracies, the number of support vectors and the number of iterations needed to achieve the convergence during the training algorithm. Two different kernels will be used, the linear and the RBF (or Gaussian) ones.
With respect to the hyper-parameters of the models, the value of both $C$ and the bandwidth $\sigma$ (in the case of the RBF kernel) will be obtained through \num{10}-fold Cross Validation (CV) for \dmgam{}, whereas for the largest dataset \dijcn{} only $C$ will be tuned, and $\sigma$ will be fixed as $\sigma = 1$ in the RBF kernel (this value is similar to the one used for the winner of the IJCNN competition~\cite{Chang2001}).
Once the hyper-parameters are tuned, both models will be used to predict over the test sets.
The stopping criterion used is $\epsilon = \num{e-5}$.

\subsubsection{Results}

The test results are summarized in \ref{TabResultsIncLarge}.
Looking first at the accuracies, both models \ssvm{} and \isvm{} are practically equivalent in three of the four experiments, where the differences are insignificant, whereas for \dijcn{} with the linear kernel the accuracy is higher in the case of \isvm{}.
Regarding the number of support vectors, \isvm{} gets sparser models for \dijcn{} with linear kernel and \dmgam{} with RBF kernel, whereas for the other two experiments both models get a comparable sparsity.
Finally, and with respect to the convergence of the training algorithms, \isvm{} shows an advantage when dealing with linear kernels, whereas for the RBF ones both approaches are practically equivalent.

\begin{mtable}{\label{TabResultsIncLarge} Test results for the larger datasets.}
\sisetup{output-exponent-marker=\textsc{e},exponent-product={},retain-explicit-plus}
\renewcommand{\widthbox}{\widthof{$\num{9.99e+9}$}}
 \begin{tabular}{l@{}c*6{c}}
  \toprule
  \multirow{2}{*}{\tabformathrow{Data}} & \multirow{2}{*}{\tabformathrow{K.}} & \multicolumn{2}{c}{\tabformathrow{Accuracy ($\%$)}} & \multicolumn{2}{c}{\tabformathrow{Number SVs}} & \multicolumn{2}{c}{\tabformathrow{Number Iters.}} \\
  \cmidrule(lr){3-4} \cmidrule(lr){5-6} \cmidrule(lr){7-8}
   & & \tabformathrow{\ssvm{}} & \tabformathrow{\isvm{}}  & \tabformathrow{\ssvm{}} & \tabformathrow{\isvm{}}  & \tabformathrow{\ssvm{}} & \tabformathrow{\isvm{}} \\
  \midrule 
  \datasettitle{\dijcn{}}
 & lin & $\num{92.17}$ & $\num{93.20}$ & $\num{2.01e+04}$ & $\num{8.00e+03}$ & $\num{5.39e+04}$ & $\num{1.75e+04}$ \\
 & rbf & $\num{98.83}$ & $\num{98.81}$ & $\num{4.99e+03}$ & $\num{4.98e+03}$ & $\num{3.31e+04}$ & $\num{3.38e+04}$ \\
\datasettitle{\dmgam{}}
 & lin & $\num{78.22}$ & $\num{78.26}$ & $\num{1.19e+04}$ & $\num{1.04e+04}$ & $\num{1.68e+05}$ & $\num{7.03e+04}$ \\
 & rbf & $\num{87.94}$ & $\num{87.98}$ & $\num{8.25e+03}$ & $\num{7.46e+03}$ & $\num{3.10e+04}$ & $\num{3.06e+04}$ \\

  \bottomrule
 \end{tabular}
\end{mtable}

It should be noticed that, for these larger datasets, only one execution is done per dataset and kernel, and hence it is difficult to get solid conclusions.
Hence, it can be interesting to analyse the performance of the models during the CV phase, as done below.

\subsubsection{Robustness w.r.t. Hyper-Parameter \texorpdfstring{$C$}{C}}

The evolution with respect to the parameter $C$ of the accuracy, the number of support vectors and the number of training iterations is shown in \ref{FigResultsParIJCNN} for both \ssvm{} and the proposed \isvm{}. For the RBF kernel, the curves correspond to the optimum value of $\sigma$ for \ssvm{}.
Observing the plots of the accuracy, \isvm{} turns out to be much more stable than \ssvm{}, getting an accuracy almost optimal and larger than that of \ssvm{} in a wide range of values of $C$. Moreover, this accuracy is achieved with a smaller number of support vectors and with less training iterations. At some point, when the value of $C$ is large enough, both \ssvm{} and \isvm{} perform the same since all the support vectors of \ssvm{} also become working vectors during the training of \isvm{}, and both algorithms \fw{} and \mfw{} provide the same model.

{
 \tikzset{trim axis left, trim axis right}
 \pgfplotscreateplotcyclelist{mycolorlist}{%
graphic1,line width=0.25pt,pattern=horizontal lines,pattern color=graphic1, opacity=0.25\\%
graphic5,line width=0.25pt,pattern=north west lines,pattern color=graphic5, opacity=0.25\\%
graphic1,line width=1.0pt\\%
graphic5,line width=1.0pt,dashed\\%
 }
 \renewcommand{\plotline}[1]{%
  \tikzset{external/export next=false}%
  \begin{tikzpicture}[]
    \begin{axis}[xtick=\empty, hide axis, scale only axis,width=10pt, height=5pt, xmin=0, xmax=1, ymin=0, ymax=2, cycle list name=mycolorlist, cycle list shift=#1]
      \addplot
      coordinates {
      (0,1)
      (1,1)
      };
    \end{axis}
  \end{tikzpicture}}
 \renewcommand{\plotarea}[1]{%
  \tikzset{external/export next=false}%
  \begin{tikzpicture}
    \begin{axis}[xtick=\empty, hide axis, scale only axis,width=10pt, height=5pt, xmin=0, xmax=1, ymin=0, ymax=1, cycle list name=mycolorlist, cycle list shift=#1]
      \addplot
      coordinates {
      (0,1)
      (1,1)} \closedcycle;
    \end{axis}
  \end{tikzpicture}}
 \pgfplotsset{scale only axis, width=\textwidth, height=0.4\textwidth, clip=true, cycle list name=mycolorlist, xmin=0.00001, xmax=100000, ylabel near ticks, xlabel near ticks, xtick={0.001, 1, 1000}}
 \begin{mfigurel}{\label{FigResultsParIJCNN} Evolution of the validation results for \dijcn{} and \dmgam{}, using both the linear and the RBF kernel for the optimum $\sigma$ of \ssvm{}, both for the standard \ssvm{} and the proposed \isvm{}. The striped regions represent the range between minimum and maximum for the \num{10} partitions, whereas the lines in the middle represent the average values.}
  \legend{\showlegenddouble{1}{3}~\isvm{}}{\showlegenddouble{0}{2}~\ssvm{}\seplegend\showlegenddouble{1}{3}~\isvm{}}
  \tikzwidth{0.49\textwidth}%
  \pgfplotsset{yticklabel pos=left}%
  \subfloat[\label{FigResultsParIJCNNlin} Linear kernel for \dijcn{}.]{\shortstack[r]{%
   \pgfplotsset{xticklabels={,,}, ylabel={Accuracy ($\%$)}, ymin=6.92e+01, ymax=9.89e+01}\ifusetikz\begin{minipage}{\figwidth}\vspace{0pt}\scriptsize\tikzsetnextfilename{demoIncSVMDataset-ijcnn-lin-01A}{\tikzset{trim axis left,trim axis right}\input{./demoIncSVMDataset-ijcnn-lin-01A.tikz}}\end{minipage}\else\includegraphics{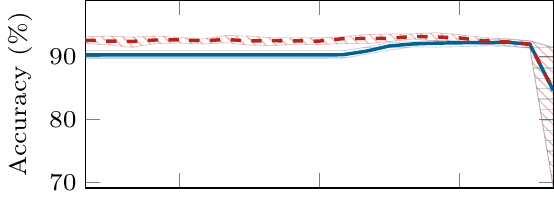}\fi\\
   \pgfplotsset{xticklabels={,,}, ylabel={Number of SVs}, ymin=4.42e+03, ymax=4.50e+04}\ifusetikz\begin{minipage}{\figwidth}\vspace{0pt}\scriptsize\tikzsetnextfilename{demoIncSVMDataset-ijcnn-lin-01N}{\tikzset{trim axis left,trim axis right}\input{./demoIncSVMDataset-ijcnn-lin-01N.tikz}}\end{minipage}\else\includegraphics{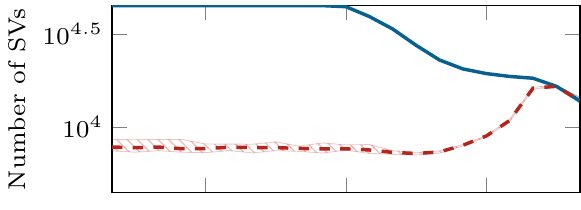}\fi\\
   \pgfplotsset{xlabel=$C$, ylabel={Iterations}, ymin=9.42e+03, ymax=4.02e+05}\ifusetikz\begin{minipage}{\figwidth}\vspace{0pt}\scriptsize\tikzsetnextfilename{demoIncSVMDataset-ijcnn-lin-01I}{\tikzset{trim axis left,trim axis right}\input{./demoIncSVMDataset-ijcnn-lin-01I.tikz}}\end{minipage}\else\includegraphics{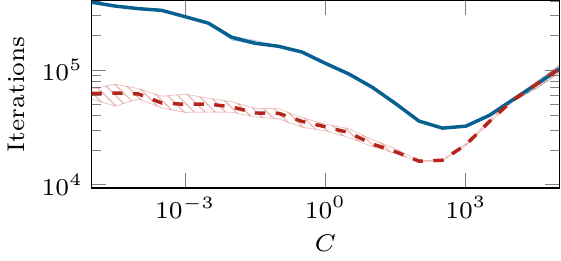}\fi%
   }}\ifusetikz\hfill\else\,\fi%
  \pgfplotsset{yticklabel pos=right}%
  \subfloat[\label{FigResultsParIJCNNrbf} RBF kernel for \dijcn{}.]{\shortstack[l]{%
   \pgfplotsset{xticklabels={,,} ,ylabel={Accuracy ($\%$)}, ymin=6.92e+01, ymax=9.89e+01}\ifusetikz\begin{minipage}{\figwidth}\vspace{0pt}\scriptsize\tikzsetnextfilename{demoIncSVMDataset-ijcnn-rbf-01A}{\tikzset{trim axis left,trim axis right}\input{./demoIncSVMDataset-ijcnn-rbf-01A.tikz}}\end{minipage}\else\includegraphics{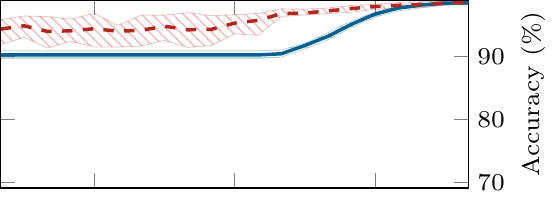}\fi\\
   \pgfplotsset{xticklabels={,,} ,ylabel={Number of SVs}, ymin=4.42e+03, ymax=4.50e+04}\ifusetikz\begin{minipage}{\figwidth}\vspace{0pt}\scriptsize\tikzsetnextfilename{demoIncSVMDataset-ijcnn-rbf-01N}{\tikzset{trim axis left,trim axis right}\input{./demoIncSVMDataset-ijcnn-rbf-01N.tikz}}\end{minipage}\else\includegraphics{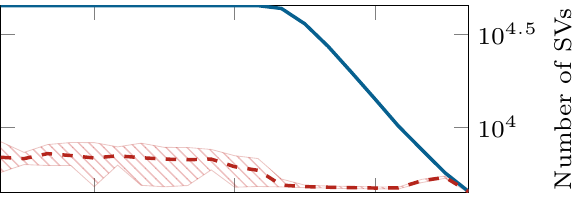}\fi\\
   \pgfplotsset{xlabel=$C$ ,ylabel={Iterations}, ymin=9.42e+03, ymax=4.02e+05}\ifusetikz\begin{minipage}{\figwidth}\vspace{0pt}\scriptsize\tikzsetnextfilename{demoIncSVMDataset-ijcnn-rbf-01I}{\tikzset{trim axis left,trim axis right}\input{./demoIncSVMDataset-ijcnn-rbf-01I.tikz}}\end{minipage}\else\includegraphics{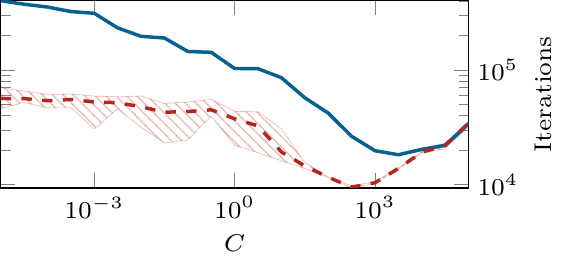}\fi%
   }}\\
  \pgfplotsset{yticklabel pos=left}%
  \subfloat[\label{FigResultsParMGAMMAlin} Linear kernel for \dmgam{}.]{\shortstack[r]{%
   \pgfplotsset{xticklabels={,,}, ylabel={Accuracy ($\%$)}, ymin=5.13e+01, ymax=8.86e+01}\ifusetikz\begin{minipage}{\figwidth}\vspace{0pt}\scriptsize\tikzsetnextfilename{demoIncSVMDataset-mgamma-lin-01A}{\tikzset{trim axis left,trim axis right}\input{./demoIncSVMDataset-mgamma-lin-01A.tikz}}\end{minipage}\else\includegraphics{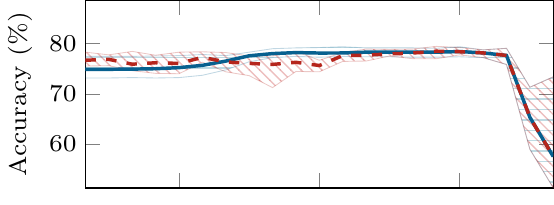}\fi\\
   \pgfplotsset{xticklabels={,,}, ylabel={Number of SVs}, ymin=3.88e+03, ymax=1.17e+04}\ifusetikz\begin{minipage}{\figwidth}\vspace{0pt}\scriptsize\tikzsetnextfilename{demoIncSVMDataset-mgamma-lin-01N}{\tikzset{trim axis left,trim axis right}\input{./demoIncSVMDataset-mgamma-lin-01N.tikz}}\end{minipage}\else\includegraphics{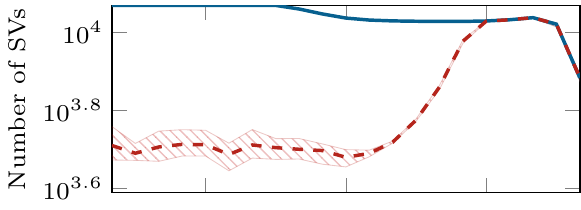}\fi\\
   \pgfplotsset{xlabel=$C$, ylabel={Iterations}, ymin=8.26e+03, ymax=1.52e+06}\ifusetikz\begin{minipage}{\figwidth}\vspace{0pt}\scriptsize\tikzsetnextfilename{demoIncSVMDataset-mgamma-lin-01I}{\tikzset{trim axis left,trim axis right}\input{./demoIncSVMDataset-mgamma-lin-01I.tikz}}\end{minipage}\else\includegraphics{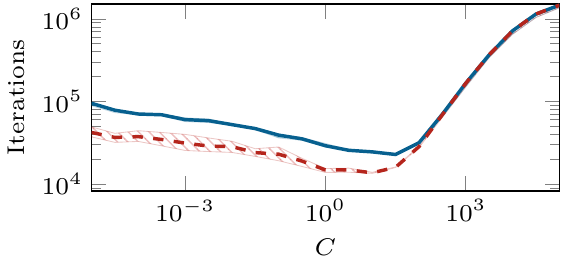}\fi%
   }}\ifusetikz\hfill\else\,\fi%
  \pgfplotsset{yticklabel pos=right}%
  \subfloat[\label{FigResultsParMGAMMArbf} RBF kernel for \dmgam{}.]{\shortstack[l]{%
   \pgfplotsset{xticklabels={,,}, ylabel={Accuracy ($\%$)}, ymin=5.13e+01, ymax=8.86e+01}\ifusetikz\begin{minipage}{\figwidth}\vspace{0pt}\scriptsize\tikzsetnextfilename{demoIncSVMDataset-mgamma-rbf-01A}{\tikzset{trim axis left,trim axis right}\input{./demoIncSVMDataset-mgamma-rbf-01A.tikz}}\end{minipage}\else\includegraphics{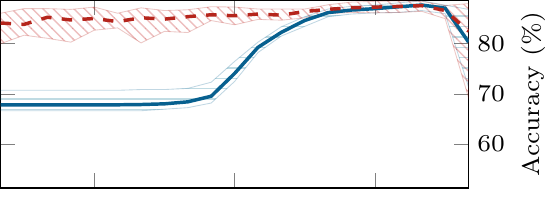}\fi\\
   \pgfplotsset{xticklabels={,,}, ylabel={Number of SVs}, ymin=3.88e+03, ymax=1.17e+04}\ifusetikz\begin{minipage}{\figwidth}\vspace{0pt}\scriptsize\tikzsetnextfilename{demoIncSVMDataset-mgamma-rbf-01N}{\tikzset{trim axis left,trim axis right}\input{./demoIncSVMDataset-mgamma-rbf-01N.tikz}}\end{minipage}\else\includegraphics{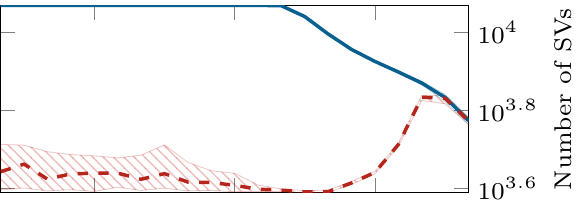}\fi\\
   \pgfplotsset{xlabel=$C$, ylabel={Iterations}, ymin=8.26e+03, ymax=1.52e+06}\ifusetikz\begin{minipage}{\figwidth}\vspace{0pt}\scriptsize\tikzsetnextfilename{demoIncSVMDataset-mgamma-rbf-01I}{\tikzset{trim axis left,trim axis right}\input{./demoIncSVMDataset-mgamma-rbf-01I.tikz}}\end{minipage}\else\includegraphics{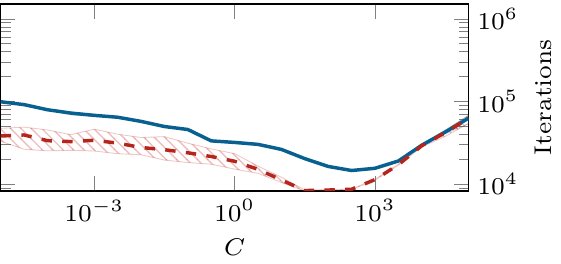}\fi%
   }}
 \end{mfigurel}
}

The stability of \isvm{} concerning the value of the regularization parameter suggests to fix $C$ beforehand in order to get rid of a tuning parameter. This option will be explored in the next bunch of experiments.

\subsection{Exhaustive Experiments}
\label{SecExpEx}

In the following experiments, the smaller \num{9} datasets of the second block of \ref{TabDatasets} will be used to compare exhaustively three models: \ssvm{}, the proposed \isvm{}, and an alternative \isvm{} model with a fixed regularization parameter (denoted as \fisvm{}), in particular $C = 1$ (normalized).

\subsubsection{Set-Up}

As in the previous experiments, the hyper-parameters will be obtained using \num{10}-fold CV (except for \fisvm{}, where $C$ is fixed and only $\sigma$ will be tuned for the RBF kernel). The stopping criterion is again $\epsilon = \num{e-5}$. Once trained, the models will be compared over the test set.

Furthermore, in order to study the significance of the differences between the models, the whole procedure, including the CV and the test phase, will be repeated \num{10} times for different training/test partitions of the data (with a proportion \SI{90}{\percent}/\SI{10}{\percent}).

\subsubsection{Results}

The results are detailed in \ref{TabResultsInc}, which includes for each of the three models the mean and standard deviation of the accuracy, the number of support vectors and the number of training iterations over the \num{10} partitions. The colours represent the rank of the models for each dataset and kernel, where the same rank is used if there is no significant difference between the models\footnote{Using a Wilcoxon signed rank test for zero median, with a significance level of $5\%$.}. 

\begin{mtablel}{\label{TabResultsInc} Test results for the exhaustive experiments (\num{10} repetitions). The colour indicates the rank (the darker, the better).}
\sisetup{output-exponent-marker=\textsc{e},exponent-product={},retain-explicit-plus}
\renewcommand{\widthbox}{\widthof{$\num{9.99e+9}\pm\num{9.9e+9}$}}
 \begin{tabular}{l@{}c@{\enspace}*3{c@{\,}}}
  \toprule
  \tabformathrow{Data} & \tabformathrow{K.} & \tabformathrow{\ssvm{}} & \tabformathrow{\isvm{}} & \tabformathrow{\fisvm{}} \\
  \midrule & & \multicolumn{3}{c}{\tabformathrow{Accuracy ($\%$)}} \\ \cmidrule{3-5}
  \datasettitle{\daust{}}
 & lin & \formata{$\num{85.65} \pm \num{4.4}$} & \formata{$\num{86.09} \pm \num{4.2}$} & \formata{$\num{85.65} \pm \num{4.1}$} \\
 & rbf & \formata{$\num{85.94} \pm \num{4.5}$} & \formata{$\num{85.36} \pm \num{4.1}$} & \formata{$\num{85.22} \pm \num{4.1}$} \\
\datasettitle{\dbrea{}}
 & lin & \formata{$\num{96.92} \pm \num{1.8}$} & \formata{$\num{96.49} \pm \num{1.7}$} & \formata{$\num{96.49} \pm \num{2.1}$} \\
 & rbf & \formata{$\num{96.63} \pm \num{2.1}$} & \formata{$\num{96.49} \pm \num{1.9}$} & \formata{$\num{96.78} \pm \num{1.7}$} \\
\datasettitle{\ddiab{}}
 & lin & \formatc{$\num{77.35} \pm \num{3.9}$} & \formata{$\num{78.52} \pm \num{3.0}$} & \formatc{$\num{76.57} \pm \num{4.4}$} \\
 & rbf & \formata{$\num{77.48} \pm \num{3.2}$} & \formata{$\num{77.09} \pm \num{3.3}$} & \formata{$\num{75.92} \pm \num{4.6}$} \\
\datasettitle{\dgerm{}}
 & lin & \formata{$\num{76.70} \pm \num{3.3}$} & \formata{$\num{76.60} \pm \num{4.1}$} & \formata{$\num{76.70} \pm \num{4.8}$} \\
 & rbf & \formata{$\num{76.70} \pm \num{5.2}$} & \formata{$\num{76.30} \pm \num{4.2}$} & \formata{$\num{76.00} \pm \num{4.9}$} \\
\datasettitle{\dhear{}}
 & lin & \formata{$\num{82.59} \pm \num{6.5}$} & \formata{$\num{83.33} \pm \num{7.3}$} & \formata{$\num{84.44} \pm \num{7.2}$} \\
 & rbf & \formata{$\num{83.33} \pm \num{8.2}$} & \formata{$\num{83.33} \pm \num{6.6}$} & \formata{$\num{84.44} \pm \num{7.6}$} \\
\datasettitle{\diono{}}
 & lin & \formata{$\num{82.65} \pm \num{6.9}$} & \formata{$\num{82.65} \pm \num{6.9}$} & \formata{$\num{81.79} \pm \num{7.7}$} \\
 & rbf & \formata{$\num{92.61} \pm \num{6.4}$} & \formata{$\num{91.19} \pm \num{6.3}$} & \formata{$\num{92.03} \pm \num{5.7}$} \\
\datasettitle{\diris{}}
 & lin & \formata{$\num{100.00} \pm \num{0.0}$} & \formata{$\num{100.00} \pm \num{0.0}$} & \formata{$\num{100.00} \pm \num{0.0}$} \\
 & rbf & \formata{$\num{100.00} \pm \num{0.0}$} & \formata{$\num{99.33} \pm \num{2.1}$} & \formata{$\num{99.33} \pm \num{2.1}$} \\
\datasettitle{\dmush{}}
 & lin & \formata{$\num{100.00} \pm \num{0.0}$} & \formata{$\num{100.00} \pm \num{0.0}$} & \formata{$\num{100.00} \pm \num{0.0}$} \\
 & rbf & \formata{$\num{100.00} \pm \num{0.0}$} & \formata{$\num{100.00} \pm \num{0.0}$} & \formata{$\num{100.00} \pm \num{0.0}$} \\
\datasettitle{\dsona{}}
 & lin & \formata{$\num{72.60} \pm \num{7.3}$} & \formata{$\num{70.21} \pm \num{8.4}$} & \formata{$\num{71.67} \pm \num{10.5}$} \\
 & rbf & \formata{$\num{87.00} \pm \num{4.6}$} & \formata{$\num{88.50} \pm \num{6.4}$} & \formata{$\num{87.00} \pm \num{5.1}$} \\

  \midrule & & \multicolumn{3}{c}{\tabformathrow{Number SVs}} \\ \cmidrule{3-5}
  \datasettitle{\daust{}}
 & lin & \formatc{$\num{5.94e+02} \pm \num{5.8e+01}$} & \formatb{$\num{4.40e+02} \pm \num{8.6e+01}$} & \formata{$\num{2.01e+02} \pm \num{6.6e+00}$} \\
 & rbf & \formatc{$\num{5.14e+02} \pm \num{7.8e+01}$} & \formatc{$\num{4.45e+02} \pm \num{9.0e+01}$} & \formata{$\num{2.14e+02} \pm \num{5.1e+01}$} \\
\datasettitle{\dbrea{}}
 & lin & \formatc{$\num{1.35e+02} \pm \num{1.5e+01}$} & \formatb{$\num{7.23e+01} \pm \num{1.9e+01}$} & \formata{$\num{5.52e+01} \pm \num{2.4e+00}$} \\
 & rbf & \formatc{$\num{2.93e+02} \pm \num{1.1e+02}$} & \formata{$\num{5.56e+01} \pm \num{1.7e+01}$} & \formata{$\num{5.62e+01} \pm \num{9.4e+00}$} \\
\datasettitle{\ddiab{}}
 & lin & \formatc{$\num{6.37e+02} \pm \num{2.8e+01}$} & \formatb{$\num{5.01e+02} \pm \num{1.1e+02}$} & \formata{$\num{3.61e+02} \pm \num{8.1e+00}$} \\
 & rbf & \formatc{$\num{6.06e+02} \pm \num{2.1e+01}$} & \formatb{$\num{4.94e+02} \pm \num{1.2e+02}$} & \formata{$\num{3.73e+02} \pm \num{2.1e+01}$} \\
\datasettitle{\dgerm{}}
 & lin & \formatc{$\num{8.04e+02} \pm \num{1.5e+01}$} & \formatb{$\num{6.18e+02} \pm \num{1.2e+02}$} & \formata{$\num{5.21e+02} \pm \num{9.5e+00}$} \\
 & rbf & \formatc{$\num{7.88e+02} \pm \num{3.8e+01}$} & \formatb{$\num{7.01e+02} \pm \num{1.1e+02}$} & \formata{$\num{4.82e+02} \pm \num{2.4e+01}$} \\
\datasettitle{\dhear{}}
 & lin & \formatc{$\num{2.18e+02} \pm \num{2.8e+01}$} & \formata{$\num{1.01e+02} \pm \num{4.1e+00}$} & \formata{$\num{1.02e+02} \pm \num{4.8e+00}$} \\
 & rbf & \formatc{$\num{1.92e+02} \pm \num{3.1e+01}$} & \formata{$\num{1.27e+02} \pm \num{1.6e+01}$} & \formata{$\num{1.28e+02} \pm \num{2.4e+01}$} \\
\datasettitle{\diono{}}
 & lin & \formatc{$\num{2.10e+02} \pm \num{1.9e+01}$} & \formatc{$\num{2.02e+02} \pm \num{2.7e+01}$} & \formata{$\num{1.40e+02} \pm \num{7.4e+00}$} \\
 & rbf & \formatc{$\num{1.72e+02} \pm \num{3.4e+01}$} & \formata{$\num{8.19e+01} \pm \num{2.3e+01}$} & \formata{$\num{7.20e+01} \pm \num{7.4e+00}$} \\
\datasettitle{\diris{}}
 & lin & \formatc{$\num{9.91e+01} \pm \num{1.8e+01}$} & \formata{$\num{2.60e+00} \pm \num{1.9e+00}$} & \formata{$\num{2.60e+00} \pm \num{1.9e+00}$} \\
 & rbf & \formatc{$\num{1.35e+02} \pm \num{0.0e+00}$} & \formata{$\num{2.70e+00} \pm \num{4.8e-01}$} & \formata{$\num{2.70e+00} \pm \num{4.8e-01}$} \\
\datasettitle{\dmush{}}
 & lin & \formatc{$\num{8.45e+02} \pm \num{1.5e+02}$} & \formata{$\num{1.12e+02} \pm \num{2.1e+01}$} & \formatb{$\num{1.40e+02} \pm \num{8.7e+00}$} \\
 & rbf & \formatc{$\num{7.31e+03} \pm \num{5.2e-01}$} & \formata{$\num{2.74e+01} \pm \num{2.0e+00}$} & \formata{$\num{2.72e+01} \pm \num{1.8e+00}$} \\
\datasettitle{\dsona{}}
 & lin & \formata{$\num{1.04e+02} \pm \num{3.6e+01}$} & \formata{$\num{1.12e+02} \pm \num{2.5e+01}$} & \formatc{$\num{1.36e+02} \pm \num{3.7e+00}$} \\
 & rbf & \formatc{$\num{1.44e+02} \pm \num{2.9e+01}$} & \formata{$\num{6.50e+01} \pm \num{1.0e+01}$} & \formata{$\num{7.10e+01} \pm \num{8.6e+00}$} \\

  \midrule & & \multicolumn{3}{c}{\tabformathrow{Number Iters.}} \\ \cmidrule{3-5}
  \datasettitle{\daust{}}
 & lin & \formatb{$\num{2.65e+04} \pm \num{5.6e+04}$} & \formatc{$\num{9.98e+04} \pm \num{9.9e+04}$} & \formata{$\num{6.41e+02} \pm \num{1.6e+01}$} \\
 & rbf & \formatc{$\num{1.26e+04} \pm \num{1.2e+04}$} & \formatc{$\num{1.74e+04} \pm \num{1.4e+04}$} & \formata{$\num{7.40e+02} \pm \num{1.2e+02}$} \\
\datasettitle{\dbrea{}}
 & lin & \formatc{$\num{2.06e+04} \pm \num{6.1e+04}$} & \formatb{$\num{8.93e+02} \pm \num{7.6e+02}$} & \formata{$\num{1.90e+02} \pm \num{1.0e+01}$} \\
 & rbf & \formatc{$\num{3.09e+03} \pm \num{7.5e+03}$} & \formata{$\num{2.56e+03} \pm \num{7.3e+03}$} & \formata{$\num{2.29e+02} \pm \num{3.8e+01}$} \\
\datasettitle{\ddiab{}}
 & lin & \formatc{$\num{1.42e+04} \pm \num{3.1e+04}$} & \formatc{$\num{9.35e+03} \pm \num{1.6e+04}$} & \formata{$\num{1.16e+03} \pm \num{5.9e+01}$} \\
 & rbf & \formatc{$\num{1.11e+04} \pm \num{9.6e+03}$} & \formatc{$\num{9.33e+03} \pm \num{1.0e+04}$} & \formata{$\num{1.39e+03} \pm \num{1.8e+02}$} \\
\datasettitle{\dgerm{}}
 & lin & \formatc{$\num{7.51e+04} \pm \num{1.6e+05}$} & \formatc{$\num{6.51e+04} \pm \num{1.6e+05}$} & \formata{$\num{2.02e+03} \pm \num{3.2e+01}$} \\
 & rbf & \formatc{$\num{7.90e+03} \pm \num{8.4e+03}$} & \formatc{$\num{7.82e+03} \pm \num{5.2e+03}$} & \formata{$\num{1.38e+03} \pm \num{7.5e+01}$} \\
\datasettitle{\dhear{}}
 & lin & \formatc{$\num{3.14e+04} \pm \num{9.6e+04}$} & \formatb{$\num{5.81e+02} \pm \num{1.5e+02}$} & \formata{$\num{3.80e+02} \pm \num{1.9e+01}$} \\
 & rbf & \formatc{$\num{1.66e+04} \pm \num{1.6e+04}$} & \formatb{$\num{5.26e+02} \pm \num{1.7e+02}$} & \formata{$\num{3.90e+02} \pm \num{7.7e+01}$} \\
\datasettitle{\diono{}}
 & lin & \formatc{$\num{9.98e+04} \pm \num{1.0e+05}$} & \formatc{$\num{9.36e+04} \pm \num{1.1e+05}$} & \formata{$\num{8.91e+02} \pm \num{4.8e+01}$} \\
 & rbf & \formatc{$\num{1.24e+03} \pm \num{6.2e+02}$} & \formatb{$\num{8.07e+02} \pm \num{8.0e+02}$} & \formata{$\num{2.16e+02} \pm \num{2.8e+01}$} \\
\datasettitle{\diris{}}
 & lin & \formatc{$\num{2.63e+02} \pm \num{6.3e+01}$} & \formata{$\num{5.40e+00} \pm \num{1.1e+01}$} & \formata{$\num{5.40e+00} \pm \num{1.1e+01}$} \\
 & rbf & \formatc{$\num{1.22e+03} \pm \num{0.0e+00}$} & \formatb{$\num{2.86e+01} \pm \num{1.8e+01}$} & \formata{$\num{1.67e+01} \pm \num{1.0e+01}$} \\
\datasettitle{\dmush{}}
 & lin & \formatc{$\num{7.49e+03} \pm \num{1.6e+03}$} & \formatb{$\num{6.98e+02} \pm \num{1.3e+02}$} & \formata{$\num{5.21e+02} \pm \num{2.6e+01}$} \\
 & rbf & \formatc{$\num{5.61e+04} \pm \num{3.1e+03}$} & \formatb{$\num{2.85e+02} \pm \num{2.7e+01}$} & \formata{$\num{1.61e+02} \pm \num{1.4e+01}$} \\
\datasettitle{\dsona{}}
 & lin & \formatc{$\num{4.36e+04} \pm \num{4.2e+04}$} & \formatc{$\num{2.22e+04} \pm \num{3.2e+04}$} & \formata{$\num{2.66e+03} \pm \num{1.1e+02}$} \\
 & rbf & \formatc{$\num{9.67e+02} \pm \num{6.0e+02}$} & \formata{$\num{2.65e+02} \pm \num{1.7e+02}$} & \formata{$\num{1.96e+02} \pm \num{2.3e+01}$} \\

  \bottomrule
 \end{tabular}
\end{mtablel}

The results are averaged as a summary in \ref{TabResultsPercentage}, where they are included as a percentage with respect to the reference \ssvm{}.
This table shows that \isvm{} allows to reduce the number of support vectors, and of training iterations, to a \SI{30.1}{\percent} and a \SI{26.5}{\percent}, whereas the accuracy only drops to a \SI{99.8}{\percent}.
Moreover, using the \fisvm{} approach allows to avoid tuning $C$, while reducing the support vectors and iterations to a \SI{26.0}{\percent} and a \SI{8.0}{\percent}, with a drop of the accuracy to only the \SI{99.7}{\percent} of the \ssvm{} accuracy.

\begin{mtable}{\label{TabResultsPercentage} Geometric mean of the test results as a percentage with respect to \ssvm{} for the exhaustive experiments.}
\sisetup{detect-all}
 \begin{tabular}{l *3{S[table-format=3.2]}}
  \toprule
  & \tabformathrow{\ssvm{}} & \tabformathrow{\isvm{}} & \tabformathrow{\fisvm{}} \\
  \midrule
   \tabformathrow{Accuracy} & 100.00 & 99.80 & 99.67\\
\tabformathrow{Number SVs} & 100.00 & 30.12 & 25.95\\
\tabformathrow{Number Iters.} & 100.00 & 26.45 & 7.98\\
  \bottomrule
 \end{tabular}
\end{mtable}

\subsection{Evolution over a Large Dataset}

This section shows the evolution of the training algorithms over a larger dataset, namely the \dmini{} shown in \ref{TabDatasets}, for the three approaches \ssvm{}, \isvm{}, and \fisvm{}.

\subsubsection{Set-Up}

In this experiment the only kernel used is the RBF one. In order to set the hyper-parameters $C$ and $\sigma$, \num{10}-fold CV is applied over a small subsample of \num{5000} patterns. Although this approach can seem quite simplistic, it provides good enough parameters for the convergence comparison that is the goal of this experiment. In the case of \fisvm{}, $C$ is fixed as $C = 1$, and the optimal $\sigma$ of \ssvm{} is directly used instead of tuning it, so that no validation is done for this model.

Once $C$ and $\sigma$ are selected, the models are trained over the whole training set during \num{40000} iterations. During this process, intermediate models are extracted every \num{5000} iterations, simulating different selections of the stopping criterion $\epsilon$. These intermediate models (trained using \num{5000}, \num{10000}, \num{15000}... iterations) are used to predict over the test set, and thus they allow to analyse the evolution of the test accuracy as a function of the number of training iterations.

\subsubsection{Results}

The results are shown in \ref{FigResultsLarge}, which includes the evolution of the number of support vectors and the test accuracy.

{
 \tikzset{trim axis left, trim axis right}
 \pgfplotscreateplotcyclelist{mycolorlist}{%
graphic1,line width=1.0pt\\%
graphic5,line width=1.0pt,dashed\\%
graphic7,line width=1.0pt,dotted\\%
graphic1,line width=1.0pt,mark=x,mark options={solid}\\%
graphic5,line width=1.0pt,dashed,mark=x,mark options={solid}\\%
graphic7,line width=1.0pt,dotted,mark=x,mark options={solid}\\%
 }
 \renewcommand{\plotline}[1]{%
  \tikzset{external/export next=false}%
  \begin{tikzpicture}[]
    \begin{axis}[hide axis, scale only axis,width=10pt, height=5pt, xmin=0, xmax=1, ymin=0, ymax=2, cycle list name=mycolorlist, cycle list shift=#1]
      \addplot
      coordinates {
      (0,1)
      (1,1)
      };
    \end{axis}
  \end{tikzpicture}}

 \pgfplotsset{scale only axis, width=\textwidth, height=0.4\textwidth, clip=true, cycle list name=mycolorlist, xmin=0, xmax=4}
 \pgfplotsset{yticklabel pos=left}
 \begin{mfigure}{\label{FigResultsLarge} Evolution of the training for \dmini{} with RBF kernel, for the standard \ssvm{}, the proposed \isvm{} and the parameter free \fisvm{}. The accuracy corresponds to the test set.}
  \legend{\showlegend{2}~\fisvm{}}{\showlegend{0}~\ssvm{}\seplegend\showlegend{1}~\isvm{}\seplegend\showlegend{2}~\fisvm{}}
  \tikzwidth{0.66\textwidth}%
  \shortstack[r]{%
   \pgfplotsset{xticklabels={,,},scaled x ticks=false,ylabel={Accuracy ($\%$)},ymin=0,ymax=100,cycle list shift=3}\ifusetikz\begin{minipage}{\figwidth}\vspace{0pt}\scriptsize\tikzsetnextfilename{demoIncSVMLarge2-A}{\tikzset{trim axis left,trim axis right}\input{./demoIncSVMLarge2-A.tikz}}\end{minipage}\else\includegraphics{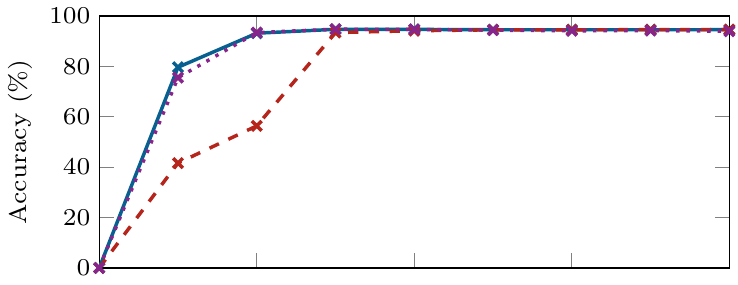}\fi\\
   \pgfplotsset{xlabel=Iteration (\num{e4}),ylabel={Number of SVs (\num{e4})},ymin=0,ymax=4,each nth point=100,filter discard warning=false,unbounded coords=discard}\ifusetikz\begin{minipage}{\figwidth}\vspace{0pt}\scriptsize\tikzsetnextfilename{demoIncSVMLarge2-N}{\tikzset{trim axis left,trim axis right}\input{./demoIncSVMLarge2-N.tikz}}\end{minipage}\else\includegraphics{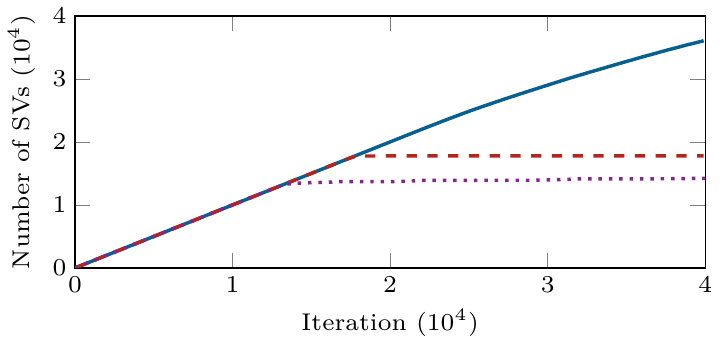}\fi%
   }
 \end{mfigure}
}

It can be observed that the standard \ssvm{} starts with the higher accuracy, but it is rapidly matched by \fisvm{}, and later by \isvm{}. Nevertheless, all of the models get finally a comparable and stable accuracy, and they reach it at approximately the same number of iterations (around \num{15000}).

The main difference can be seen in the evolution of the number of support vectors. In the first iterations, all the models introduce a new support vector at each iteration, but first \fisvm{} and second \isvm{} saturate this number presenting a final almost flat phase. On the contrary, although \ssvm{} reduces slightly the rate of growth of the number of support vectors, it continues adding more patterns to the solution during the whole training. This means that, if the stopping criterion is not carefully chosen for \ssvm{}, this model will use much more support vectors than needed, with the corresponding increase in its complexity. On the other side, \isvm{} and \fisvm{} (both models trained with \mfw{}) limit successfully the number of support vectors, providing sparser models with the same accuracy as \ssvm{}.

As a remark, it should be noticed that for \fisvm{} no validation phase was needed, since $C$ is fixed beforehand, and for $\sigma$ the optimal of \ssvm{} was used. 
This suggests again that \fisvm{} can be applied successfully with $C = 1$ and only tuning $\sigma$ if the RBF kernel is to be used.

\subsection{Dependence on the Initialization}

Another aspect of the proposed algorithm is its dependence on the initialization. Whereas the standard SVM is trained by solving a convex optimization problem with unique solution in the non-degenerate case, the proposed method summarized in \ref{AlgMFW} starts with an initial working vector that influences the resulting model, since it will determine the final subset of working vectors $\acto$.

\subsubsection{Set-Up}

A comparison of the models obtained using different initial working vectors will be done to study the variability due to the initialization.
In particular, for all \num{9} smaller datasets of \ref{TabDatasets} and in this case only for the linear kernel with the parameters obtained in \ref{SecExpEx} (no CV process is repeated), one model per possible initial point will be trained, so that at the end there will be as many models as training patterns for each partition.

\subsubsection{Results}

A first measure for the dependence on the initialization are the differences between the sets of support vectors of the models. \Ref{TabInitialOverlap} shows in the second column the average overlap between these sets of support vectors for every pair of models with different initializations, quantified as the percentage of support vectors that are shared on both models over the total number of support vectors\footnote{In particular, there are $\sfrac{\npat \prn{\npat - 1}}{2}$ measures per each one of the \num{10} repetitions, since there are $\npat$ different possible initializations (as many as training patterns).}. The two easiest datasets, \diris{} and \dmush{}, show the smallest overlaps (around \SI{30}{\percent}) and hence the highest dependence on the initialization. This is not surprising, since for example in the \diris{} dataset there are many hyperplanes that separate both classes perfectly. The remaining datasets show an overlap above \SI{80}{\percent}, and there are \num{4} datasets above \SI{95}{\percent}. Therefore, the influence on the initialization will depend strongly on the particular dataset.

\renewcommand{\datasettitle}[1]{\truncate[]{\sizetitle}{#1}}
\begin{mtable}{\label{TabInitialOverlap} Results for the initialization dependence, including the overlap of the different sets of support vectors for \isvm{}, and the accuracies of \ssvm{}, \isvm{} and \isvm{} considering all possible initializations.}
\sisetup{detect-all}
 \begin{tabular}{l*4{c}}
  \toprule
  \multirow{2}{*}{\tabformathrow{Data}} & \tabformathrow{SVs Overlap ($\%$)} & \multicolumn{3}{c}{\tabformathrow{Accuracy ($\%$)}} \\
  \cmidrule(lr){2-2} \cmidrule(lr){3-5}
  & \tabformathrow{\isvm{} Ini.} & \tabformathrow{\ssvm{}} & \tabformathrow{\isvm{}} & \tabformathrow{\isvm{} Ini.} \\
  \midrule
   \datasettitle{\daust{}} & $\num{95.69} \pm \num{4.9}$ & $\num{85.65} \pm \num{4.4}$ & $\num{86.09} \pm \num{4.2}$ & $\num{86.06} \pm \num{3.9}$ \\
\datasettitle{\dbrea{}} & $\num{83.96} \pm \num{4.2}$ & $\num{96.92} \pm \num{1.8}$ & $\num{96.49} \pm \num{1.7}$ & $\num{96.45} \pm \num{1.7}$ \\
\datasettitle{\ddiab{}} & $\num{97.26} \pm \num{2.2}$ & $\num{77.35} \pm \num{3.9}$ & $\num{78.52} \pm \num{3.0}$ & $\num{77.69} \pm \num{3.6}$ \\
\datasettitle{\dgerm{}} & $\num{92.24} \pm \num{4.8}$ & $\num{76.70} \pm \num{3.3}$ & $\num{76.60} \pm \num{4.1}$ & $\num{76.86} \pm \num{3.7}$ \\
\datasettitle{\dhear{}} & $\num{81.44} \pm \num{3.1}$ & $\num{82.59} \pm \num{6.5}$ & $\num{83.33} \pm \num{7.3}$ & $\num{82.87} \pm \num{7.6}$ \\
\datasettitle{\diono{}} & $\num{97.66} \pm \num{3.5}$ & $\num{82.65} \pm \num{6.9}$ & $\num{82.65} \pm \num{6.9}$ & $\num{83.16} \pm \num{6.4}$ \\
\datasettitle{\diris{}} & $\num{30.99} \pm \num{25.0}$ & $\num{100.00} \pm \num{0.0}$ & $\num{100.00} \pm \num{0.0}$ & $\num{99.66} \pm \num{1.5}$ \\
\datasettitle{\dmush{}} & $\num{32.61} \pm \num{5.2}$ & $\num{100.00} \pm \num{0.0}$ & $\num{100.00} \pm \num{0.0}$ & $\num{100.00} \pm \num{0.0}$ \\
\datasettitle{\dsona{}} & $\num{96.69} \pm \num{2.3}$ & $\num{72.60} \pm \num{7.3}$ & $\num{70.21} \pm \num{8.4}$ & $\num{70.47} \pm \num{8.7}$ \\

  \bottomrule
 \end{tabular}
\end{mtable}

Nevertheless, looking at the accuracies included in \ref{TabInitialOverlap}, and specifically comparing the results of \isvm{} when considering only one or all the possible initializations (columns \num{4} and \num{5}), it seems that there is no noticeable difference between them. In particular, and reducing the table to a single measure, the average error is \SI{86.05}{\percent} for \ssvm{}, \SI{85.99}{\percent} for \isvm{} and \SI{85.91}{\percent} for \isvm{} considering all the initializations.

Moreover, as an additional experiment \ref{FigResultsParInitialDependence} shows the results of an extra \num{10}-fold CV for the \dhear{} dataset with linear kernel, including the results of \isvm{} with all the possible initializations. It can observed that \isvm{} performs basically the same in average when changing the initial vector, in terms of all three the accuracy, the number of support vectors and the number of iterations, although obviously the distance between minimum and maximum value for each $C$ (striped region in the plots) increases since more experiments are included.

{
 \tikzset{trim axis left, trim axis right}
 \pgfplotscreateplotcyclelist{mycolorlist}{%
graphic1,line width=0.25pt,pattern=horizontal lines,pattern color=graphic1, opacity=0.25\\%
graphic5,line width=0.25pt,pattern=north west lines,pattern color=graphic5, opacity=0.25\\%
OliveGreen,line width=0.25pt,pattern=north east lines,pattern color=OliveGreen, opacity=0.25\\%
graphic1,line width=1.0pt\\%
graphic5,line width=1.0pt,dashed\\%
OliveGreen,line width=1.0pt,dotted\\%
 }
 \renewcommand{\plotline}[1]{%
  \tikzset{external/export next=false}%
  \begin{tikzpicture}[]
    \begin{axis}[hide axis, scale only axis,width=10pt, height=5pt, xmin=0, xmax=1, ymin=0, ymax=2, cycle list name=mycolorlist, cycle list shift=#1]
      \addplot
      coordinates {
      (0,1)
      (1,1)
      };
    \end{axis}
  \end{tikzpicture}}
 \renewcommand{\plotarea}[1]{%
  \tikzset{external/export next=false}%
  \begin{tikzpicture}
    \begin{axis}[hide axis, scale only axis,width=10pt, height=5pt, xmin=0, xmax=1, ymin=0, ymax=1, cycle list name=mycolorlist, cycle list shift=#1]
      \addplot
      coordinates {
      (0,1)
      (1,1)} \closedcycle;
    \end{axis}
  \end{tikzpicture}}

 \pgfplotsset{scale only axis, width=\textwidth, height=0.4\textwidth, clip=true, cycle list name=mycolorlist, xmin=0.00001, xmax=100000}
 \pgfplotsset{yticklabel pos=left}
 \begin{mfigure}{\label{FigResultsParInitialDependence} Evolution of the validation results for \dhear{} with linear kernel, for the standard \ssvm{}, the proposed \isvm{} and \isvm{} considering all possible initializations. The striped regions represent the range between minimum and maximum for the \num{10} partitions (\num{10} times the number of training patterns when considering all the possible initializations), whereas the lines in the middle represent the average values.}
 \legend{\showlegenddouble{2}{5}~\isvm{} Ini.}{\showlegenddouble{0}{3}~\ssvm{}\seplegend\showlegenddouble{1}{4}~\isvm{}\seplegend\showlegenddouble{2}{5}~\isvm{} Ini.}
  \tikzwidth{0.66\textwidth}%
  \shortstack[r]{%
   \pgfplotsset{xticklabels={,,},ylabel={Accuracy ($\%$)}, ymin=1.67e+01, ymax=1.00e+02}\ifusetikz\begin{minipage}{\figwidth}\vspace{0pt}\scriptsize\tikzsetnextfilename{demoInitialDependence3-heart-lin-A}{\tikzset{trim axis left,trim axis right}\input{./demoInitialDependence3-heart-lin-A.tikz}}\end{minipage}\else\includegraphics{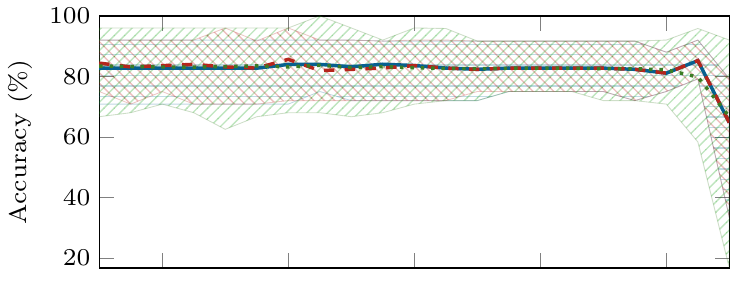}\fi\\
   \pgfplotsset{xticklabels={,,},ylabel={Number of SVs}, ymin=6.60e+01, ymax=2.19e+02}\ifusetikz\begin{minipage}{\figwidth}\vspace{0pt}\scriptsize\tikzsetnextfilename{demoInitialDependence3-heart-lin-N}{\tikzset{trim axis left,trim axis right}\input{./demoInitialDependence3-heart-lin-N.tikz}}\end{minipage}\else\includegraphics{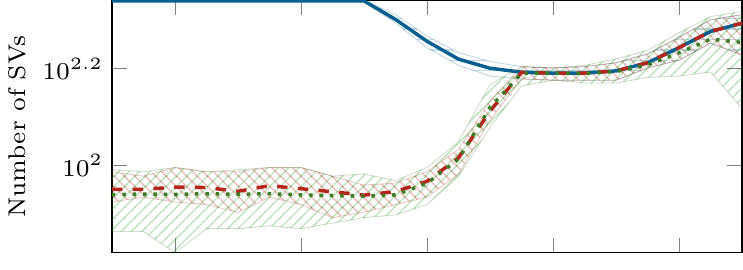}\fi\\
   \pgfplotsset{xlabel=$C$,ylabel={Iterations}, ymin=2.54e+02, ymax=4.95e+05}\ifusetikz\begin{minipage}{\figwidth}\vspace{0pt}\scriptsize\tikzsetnextfilename{demoInitialDependence3-heart-lin-I}{\tikzset{trim axis left,trim axis right}\input{./demoInitialDependence3-heart-lin-I.tikz}}\end{minipage}\else\includegraphics{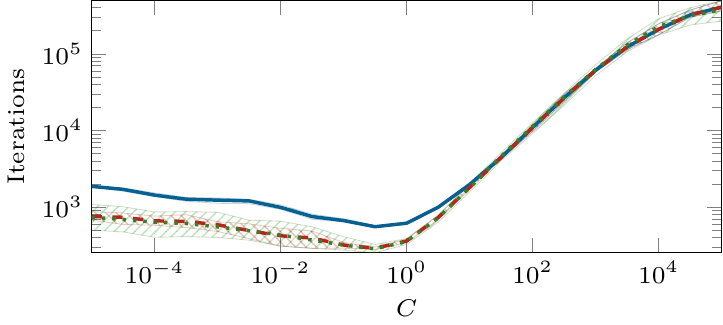}\fi%
   }
 \end{mfigure}
}

Therefore, it can be concluded that, although the proposed method can depend strongly on the initialization for some datasets, it seems that the resulting models are comparable in terms of accuracy, number of support vectors and required training iterations.
On the other side, it should be noticed that trying to establish a methodology to initialize in a clever way the algorithm would probably need of a considerable overhead, since the computational advantage of Frank--Wolfe and related methods is that they compute the gradient incrementally because the changes only affect a few coordinates. A comparison between all the possible initial vertices, leaving aside heuristics, would require the use of the whole kernel matrix, what could be prohibitive for large datasets.

\section{Conclusions}
\label{SecConc}

The connection between Lasso and Support Vector Machines (SVMs) has been used to propose an algorithmic improvement in the Frank--Wolfe (\fw{}) algorithm used to train the SVM. This modification is based on the re-weighted Lasso to enforce more sparsity, and computationally it just requires an additional conditional check at each iteration, so that the overall complexity of the algorithm remains the same.
The convergence analysis of this Modified Frank--Wolfe (\mfw{}) algorithm shows that it provides exactly the same SVM model that one would obtain applying the original \fw{} algorithm only over a subsample of the training set.
Several numerical experiments have shown that \mfw{} leads to models comparable in terms of accuracy, but with a sparser dual representation, requiring less iterations to be trained, and much more robust with respect to the regularization parameter, up to the extent of allowing to fix this parameter beforehand, thus avoiding its validation.

Possible lines of extension of this work are to explore other SVM formulations, for example based on the \lo{} loss, which should allow for even more sparsity. The \mfw{} algorithm could also be applied to the training of other machine learning algorithms such as non-negative Lasso, or even to general optimization problems that permit a certain relaxation of the original formulation.

\section*{Acknowledgments}

 The authors would like to thank the following organizations.
 \begin{itemize*}
  \item EU: The research leading to these results has received funding from the European Research Council under the European Union's Seventh Framework Programme (FP7/2007-2013) / ERC AdG A-DATADRIVE-B (290923). This paper reflects only the authors' views, the Union is not liable for any use that may be made of the contained information.
  \item Research Council KUL: GOA/10/09 MaNet, CoE PFV/10/002 (OPTEC), BIL12/11T; PhD/Postdoc grants.
  \item Flemish Government:
  \begin{itemize*}
   \item FWO: G.0377.12 (Structured systems), G.088114N (Tensor based data similarity); PhD/Postdoc grants.
   \item IWT: SBO POM (100031); PhD/Postdoc grants.
  \end{itemize*}
  \item iMinds Medical Information Technologies SBO 2014.
  \item Belgian Federal Science Policy Office: IUAP P7/19 (DYSCO, Dynamical systems, control and optimization, 2012-2017).
  \item Fundaci\'on BBVA: project FACIL--Ayudas Fundaci\'on BBVA a Equipos de Investigaci\'on Cient\'ifica 2016.
  \item UAM--ADIC Chair for Data Science and Machine Learning.
 \end{itemize*}

\section*{References}

\bibliographystyle{elsarticle-num} 
\bibliography{Bibliography}

\end{document}